\documentclass[10pt, conference, letterpaper]{IEEEtran}
\IEEEoverridecommandlockouts
\pdfoutput=1
\usepackage{cite}
\usepackage{amsmath,amssymb,amsfonts}
\usepackage{algorithmic}
\usepackage{graphicx}
\usepackage{textcomp}
\usepackage{comment}
\usepackage{autobreak}
\usepackage{xcolor}
\usepackage{array}
\usepackage{amsmath,amssymb,amsfonts}
\usepackage{amsthm}
\usepackage{mathrsfs}
\usepackage{CJK}
\usepackage[ruled,linesnumbered]{algorithm2e}
\usepackage{amsmath}
\usepackage{url}
\usepackage{color}
\usepackage{pifont}
\usepackage{braket}
\usepackage{multirow}
\usepackage{subfigure}

\newtheorem{assumption}{Assumption}
\newtheorem{lemma}{Lemma}
\newtheorem{theorem}{Theorem}
\newtheorem{definition}{Definition}

\newtheorem{proposition}{Proposition}

\allowdisplaybreaks[4]

%
\ifCLASSINFOpdf
\else
\fi
%
%

\hyphenation{op-tical net-works semi-conduc-tor}

\begin{document}
%
\title{Fed-CVLC: Compressing Federated Learning Communications with Variable-Length Codes}

\author{\IEEEauthorblockN{Xiaoxin Su\IEEEauthorrefmark{2}, Yipeng Zhou\IEEEauthorrefmark{3}, Laizhong Cui\IEEEauthorrefmark{1} \IEEEauthorrefmark{2}\IEEEauthorrefmark{4}, John C.S. Lui\IEEEauthorrefmark{5} and Jiangchuan Liu\IEEEauthorrefmark{6}}
\IEEEauthorblockA{\IEEEauthorrefmark{2}College of Computer Science and Software Engineering, Shenzhen University, Shenzhen, China}
\IEEEauthorblockA{\IEEEauthorrefmark{3}School  of Computing, Faculty of Science and Engineering, Macquarie University, Sydney, Australia}
\IEEEauthorblockA{\IEEEauthorrefmark{4}Guangdong Laboratory of Artificial Intelligence and Digital Economy (SZ), Shenzhen University, Shenzhen, China}
\IEEEauthorblockA{\IEEEauthorrefmark{5}Department of Computer Science and Engineering, The Chinese University of Hong Kong, HKSAR}
\IEEEauthorblockA{\IEEEauthorrefmark{6}School of Computing Science, Simon Fraser University, Canada}
\IEEEauthorblockA{Email: suxiaoxin2016@163.com, yipeng.zhou@mq.edu.au, cuilz@szu.edu.cn, cslui@cse.cuhk.edu.hk and jcliu@sfu.ca}

\IEEEauthorblockA{
	\thanks{
		\newline This work has been partially supported by National Key Research and Development Plan of China under Grant No. 2022YFB3102302, National Natural Science Foundation of China under Grant No. U23B2026 and No. 62372305, Shenzhen Science and Technology Program under Grant No. RCYX20200714114645048, the RGC GRF 14202923 and an NSERC Discovery Grant.}
		\thanks{\textit{\IEEEauthorrefmark{1}Corresponding author: Laizhong Cui}
	}
}
}

\maketitle

\begin{abstract}

In Federated Learning (FL) paradigm, a parameter server (PS) concurrently communicates with distributed participating clients for model collection,  update aggregation, and model distribution over multiple rounds, without touching  private data owned by  individual clients. FL is appealing in preserving data privacy; yet the communication between the PS and  scattered clients can be a severe bottleneck. Model compression algorithms, such as quantization and sparsification, have been suggested but they generally assume a fixed code length, which does not reflect the heterogeneity and variability of model updates. In this paper, through both analysis and experiments, we show strong evidences  that variable-length is beneficial for compression in FL. We accordingly present Fed-CVLC (Federated Learning Compression with Variable-Length Codes), which fine-tunes the code length  in response of the dynamics of model updates. We develop optimal tuning strategy that  minimizes the loss function (equivalent to maximizing the model utility) subject to the budget for communication. We further demonstrate that Fed-CVLC is indeed a general compression design that bridges quantization and sparsification, with greater flexibility.  
Extensive experiments have been conducted with public datasets 
to demonstrate that Fed-CVLC remarkably outperforms state-of-the-art baselines, improving model utility by 1.50\%-5.44\%, or shrinking communication traffic by 16.67\%-41.61\%.
\end{abstract}



%
\IEEEpeerreviewmaketitle

\section{Introduction}
\label{introduction}

Training advanced models, such as Convolutional Neural Network (CNN), needs a massive amount of training data. Collecting the data from clients however may expose their privacy with data breach or privacy infringement risks \cite{yang2019federated}.  To preserve data privacy while training models, the federated learning (FL) paradigm
was devised in \cite{mcmahan2017communication}, which orchestrates multiple clients to train a model together without exposing their raw data. FL has attracted tremendous attention from both industry and academia \cite{lim2020federated} due to its capability in utilizing distributed resources with privacy protection  \cite{8466361}.

In FL, data samples are owned and distributed on  multiple decentralized clients. A parameter server (PS) is deployed to coordinate the model training process for multiple  rounds via Internet communications.  Briefly speaking, there are three key steps in FL training \cite{mcmahan2017communication}. In \emph{Step 1},  the PS selects participating clients at the beginning of each communication round, a.k.a., a global iteration, to distribute the latest model; In \emph{Step 2}, each client receiving the model conducts local training iterations with local data samples to update the model; In \emph{Step 3}, model updates  will be returned by the participating clients to the PS, which will aggregate the collected model updates to revise the model before ending a communication round. A new communication round will then start from Step 1 again, until terminate conditions are met \cite{hamer2020fedboost}. 

Since the distributed clients and the PS communicate over the Internet, the training time of FL can be seriously prolonged in these steps. In particular, in Step 3,  the connection between some clients and the remote PS can be quite slow ($\sim$ 1Mbps) \cite{rothchild2020fetchsgd}, and thereby hindering the overall time efficiency. Modern Internet links are often asymmetric, with the uplink capacity being  much smaller than the downlink capacity \cite{konevcny2016federated},  exacerbating the communication bottleneck of FL. 
These communication challenges are particularly severe  when training advanced high-dimensional models, such as ResNet \cite{he2016deep} and Transformer \cite{NeurIPS2017_3f5ee243} with tens of millions of parameters.

To expedite FL, one may compress model updates to be transmitted from participating clients to the PS \emph{with the cost of less accurate model updates}. The compression algorithms for FL can be broadly classified as  {\em quantization} and {\em sparsification}, which commonly fix the code length.  The former  expresses each model update with a fewer number of bits. For example, in \cite{wen2017terngrad} and \cite{bernstein2018signsgd}, each model update is only represented by 2 bits and 1 bit, respectively, to achieve 16 and 32 compression rates by supposing that each original model update takes 32 bits.
Different from quantization,  sparsification  accelerates communications by transmitting a fewer number of model updates. For instance, the $Top_k$ compression algorithm only transmits  $k$ model updates of the largest magnitudes from clients to the PS. $Top_k$ can achieve a much higher compression rate than that of quantization in that most model updates are close to 0 with a very small magnitude in practice \cite{lin2017deep}. Recently, more sophisticated hybrid compression was proposed  \cite{sattler2019robust} by combining quantization and sparsification.  

The core problem in FL model compression is how to minimize the accuracy loss of model updates due to compression. Using the fixed code length for model compression fails to exploit the fact revealed by existing works \cite{lin2017deep, m2021efficient} that the distribution  of the magnitudes of model updates is very skewed, and thus existing works cannot fully minimize the compression loss. In view of this gap,  we propose a novel Federated Learning Compression with Variable-Length Codes (Fed-CVLC) algorithm, which employs codes of different lengths to compress model updates. To motivate our study, we use a simple example to  illustrate the idea of variable-length codes for model compression. 
Then, the optimization problem to minimize the loss function subject to limited communication traffic is formulated with respect to the code length of each model update. Besides,  our proposed method is friendly for protocol implementation in that model updates encapsulated into the same communication packet are  constrained by the same code length to  avoid excessive  overhead. Based on our analysis, we show that the formulated problem can be efficiently solved to optimally determine the code length of each model update. 

The advantage of our design lies in that  Fed-CVLC unifies the design  of quantization and sparsification since quantization and sparsification can be regarded as special cases of Fed-CVLC. Fed-CVLC degenerates to quantization compression, if the code length is fixed as a constant for all model updates.  Similarly, Fed-CVLC degenerates to the $Top_k$ sparsification compression if we fix the code length for $k$ top model updates and set code length as 0 for remaining model updates. Therefore, quantization and sparsification are rigid in the sense that  they only set one or two code lengths for compression. In contrast, Fed-CVLC is more general and flexible by taking the  code length as a variable toward compression in FL.

At last, we  conduct extensive experiments with  CIFAR-10, FEMNIST and CIFAR-100 datasets  using Fed-CVLC and state-of-the-art  compression algorithms. The experimental results demonstrate the superb performance of our Fed-CVLC algorithm, which can improve model accuracy by 3.21\% and shrink communication traffic by 27.64\% on average in comparison with state-of-the-art compression baselines.

The rest of the paper is organized as follows. We introduce the related works in Sec.~\ref{RelatedWork} and 
preliminaries in Sec.~\ref{Preliminaries}. 
The detailed analysis of our problem  and the design of Fed-CVLC with a discussion of its implementation  are delivered in Sec.~\ref{Analysis}.
Ultimately, we conclude our work in Sec.~\ref{Conclusion} after evaluating the performance of Fed-CVLC in Sec.~\ref{Experiment}.

\section{Related Work}
\label{RelatedWork}

In this section, we discuss related works from two perspectives: federated learning design and model compression. 

Federated learning (FL) is a novel distributed machine learning framework, which was originally proposed by Google \cite{mcmahan2017communication} to protect mobile user privacy. FedAvg \cite{mcmahan2017communication} has become the most fundamental algorithm for conducting FL. FedAvg performs multiple local iterations in each round to  reduce communication frequency between clients and the PS. 
The convergence rate of FedAvg was derived in \cite{li2019convergence} and \cite{yang2021achieving} to validate the effectiveness of FL in model training. 
Various variants of FedAvg were devised to improve FL from different perspectives \cite{9796724, 9796719, 9796721}. 
{In \cite{9796818,9796935}, the authors discussed the effect of client sampling and designed client scheduling strategies to optimize the model performance. Wang \emph{et al.} \cite{9488756} designed a FL training paradigm with hierarchical aggregation and proposed an efficient algorithm to determine the optimal cluster structure with resource constraints.}

Model compression is  perpendicular to the above works, which significantly improves the training efficiency of FL by largely shrinking communication traffic. We briefly discuss two most popular  model compression approaches: quantization and sparsification. 

The essence of quantization  is to use a fewer number of bits to represent each model update. 
PQ \cite{suresh2017distributed} and QSGD \cite{alistarh2017qsgd} are efficient quantization algorithms that map model updates to a finite set of discrete values in an unbiased manner, thereby reducing the number of bits for each update.
DAdaQuant was designed by \cite{honig2022dadaquant} using time- and client- adaption to adjust quantization levels, optimizing the final model utility of FL. 

Sparsification, achieving a much  higher compression rate, is more aggressive by removing unimportant model updates from communications.  
\cite{wangni2018gradient} proposed to drop a proportion of model parameters stochastically and scale the remaining parameters. 
The $Top_k$ algorithm \cite{stich2018sparsified, qian2021error} only transmits $k$ model updates of the largest magnitudes. An error compensation mechanism was used to guarantee convergence performance. 
\cite{NEURIPS2019_d202ed5b, sattler2019robust} came up with novel compression algorithms by combining  quantization and sparsification. 
The convergence of these algorithms was theoretically analyzed. 

Nonetheless, all these quantization and sparsification algorithms fix the code length for model compression regardless  the skewness of the magnitude distribution of  model updates. Different from existing works, Fed-CVLC adopts  variable-length codes to better gauge the importance of model updates by using  more bits to represent  larger magnitudes and fewer bits to represent smaller magnitudes. 


\section{Preliminaries}
\label{Preliminaries}

Consider a generic FL system with $N>1$ clients in the system. Local data samples on each client $i$ are drawn from a local data distribution denoted by $\mathcal{D}_i$, where $i\in[N]$. The objective of FL is to train a model vector $\mathbf{w}\in\mathbb{R}^d$ that minimizes the global loss function $F(\mathbf{w})$ defined across all $N$ clients \cite{li2019convergence}. In other words,  $\mathbf{w}^*=\arg\min_{\mathbf{w}} \left[F(\mathbf{w})=\frac{1}{N}\sum_{i=1}^N\mathbb{E}_{\xi\sim\mathcal{D}_i}f(\xi,\mathbf{w})\right]$,
where $\xi$ is a particular data sample randomly drawn from the local data distribution $\mathcal{D}_i$ and $f()$ is the loss function evaluating the model using a particular sample.

Let $F_i(\mathbf{w})=\mathbb{E}_{\xi\sim\mathcal{D}_i}f(\xi,\mathbf{w})$ denote the local loss function on client $i$. FedAvg \cite{mcmahan2017communication}, the most fundamental model training algorithm in FL,   and its variants \cite{wang2019adaptive,luo2020cost} commonly have the  following three steps for deriving $\mathbf{w}^*$. 

\begin{enumerate}
	\item At the beginning of communication round $t$, the PS distributes the latest vector $\mathbf{w}_t$ to a number of selected clients, denoted by set $\mathcal{S}_t$. 
	\item  Each selected client $i$ will use downloaded $\mathbf{w}_t$  to initialize her local training vector as  $\mathbf{w}^i_{t,0}=\mathbf{w}_t$ and update the model locally for $E\ge1$ local iterations 
 $\mathbf{w}^i_{t,j+1}=\mathbf{w}^i_{t,j}-\eta_{t,j}\nabla F_i(\mathbf{w}^i_{t,j},\mathcal{B}^i_{t,j})\ $ for $j=0,\dots, E-1.$
	Here $\mathcal{B}^i_{t,j}$ is a data sample  batch (with size $|\mathcal{B}^i_{t,j}|$) randomly selected from client $i$'s local dataset for the $j$-th local iteration. 
     Model updates updated by client $i$ in communication round $t$ are denoted by $\mathbf{U}^i_{t}=\mathbf{w}^i_{t,0}-\mathbf{w}^i_{t,E}$, which will be returned to the PS.
	\item The PS receiving $\mathbf{U}^i_{t}$'s from all participating clients will update the model by aggregating model updates, \emph{i.e.},   $\mathbf{w}_{t+1}=\mathbf{w}_t-\frac{1}{|\mathcal{S}_t|}\sum_{i\in\mathcal{S}_t}\mathbf{U}^i_{t}$.
	Then, the PS goes back to Step 1 until total $T\ge1$ iterations are executed. 
	
\end{enumerate}

To reduce the communication traffic for transmitting $\mathbf{U}_i^t$, various compression algorithms can be applied to compress $ \mathbf{U}_i^t$ into $ \hat{\mathbf{U}}_i^t$ which can consume much less communication bandwidth. The discrepancy between  $ \mathbf{U}_i^t$ and $ \hat{\mathbf{U}}_i^t$ is called the compression error, a.k.a., the accuracy loss of  model updates  attributed to model updates compression.

\section{Analysis and Algorithm Design}
\label{Analysis}

In this section, we introduce a simple example to illustrate the intuition of our design, and then formulate the problem to minimize the compression error  with  a fixed compression rate using variable-length codes. 
At last, we discuss how to implement our algorithm with lightweight overhead. 

{\color{black}
\subsection{A Motivation Example}
State-of-the-art quantization and sparsification compression algorithms generally set the fixed code length to compress  model updates, which unfortunately may not minimize the compression error of  model updates with different values. For example,  suppose we use fixed 6 bits to encode two numbers with values 1,000 and 2, respectively. Encoding the number 1,000 then loses accuracy significantly, whereas encoding the number 2  wastes bits. This observation inspires us to consider  encoding model updates with variable-length codes.  

We use an example to illustrate the intuition of our algorithm.  In Fig.~\ref{fig:top_quan}, we plot the distribution of model updates by training a CNN model with 300,000 parameters. Here, the x-axis represents the ranked ID of model updates (in a log scale because most model updates are close to $0$),  the left y-axis represents the magnitude of model updates and the right y-axis represents the number of bits to encode each model update. The continuous curve represents the magnitude of model updates, while each vertical bar  represents the number of bits to encode its corresponding model update.

From Fig.~\ref{fig:top_quan}, we can observe that the distribution of model update magnitudes is very skewed with only a small number of model updates far away from $0$. In Fig.~\ref{fig:top_quan}(a), quantization uniformly compresses each model update with 6 bits regardless of the magnitude. Sparsification plus quantization plotted in Fig.~\ref{fig:top_quan}(b) is more advanced than quantization by using 6 bits to  encode top 3,000 model updates (\emph{i.e.}, top 1\%) and 0 bits to encode the rest small model updates. The principle of our algorithm is presented in Fig.~\ref{fig:top_quan}(c), which is  more flexible in setting the code length in accordance with the model update magnitude for  compression. 

\begin{figure}[h!]
\vspace{-0.45cm}
	\setlength{\abovecaptionskip}{-0.1cm}
	\centering
	\includegraphics[width=\linewidth]{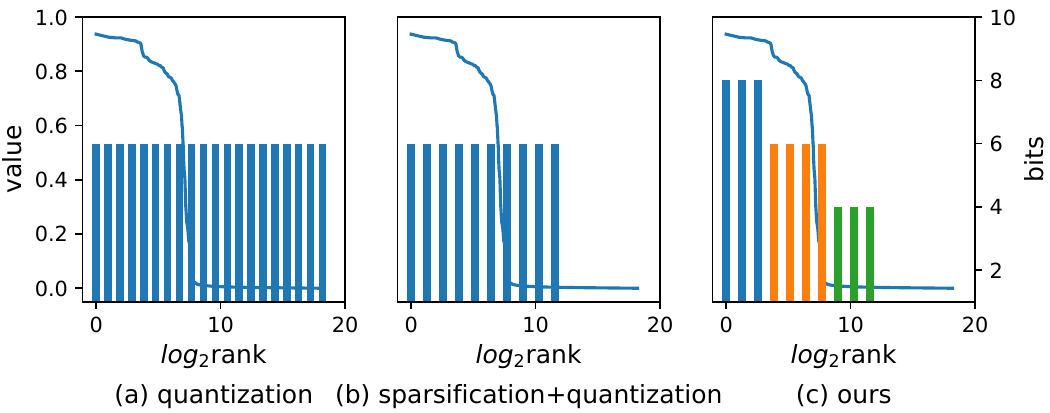}
	\caption{The distribution of  model updates in a particular  round and the code length of each model update set by different compression algorithms.
 }
	\label{fig:top_quan}
 \vspace{-0.5cm}
\end{figure}

}

\subsection{Compression Error}
The core problem of model compression when shrinking communication traffic lies in incurred compression error causing inaccuracy of model updates. To quantify the impact of a model compression algorithm on model utility, it is imperative to explicitly analyze and quantify the compression error.

To make  analysis of compression algorithms formulable, it is common to use a distribution to model the magnitudes of model updates. For example, in \cite{stich2018sparsified}, it was assumed that the distribution of model update magnitudes is uniform. 
However, this simple uniform distribution cannot capture the skewness  of  model updates in practice. In \cite{m2021efficient}, a more generic power law distribution was adopted to better represent the distribution of model update magnitudes. 

\begin{definition}
	\label{DEF:PowerDist}
	On client $i$, model updates ranked in a descending order of their absolute values satisfy a power law decreasing, \emph{i.e.,} $|\mathbf{U}^i_t\{l\}|\le \phi_il^{\alpha_i} \quad \forall l\in\{1,2,\dots,d\}$, 
	where $\mathbf{U}^i_t\{l\}$ is the $l$-th largest model update (in terms of absolute value) in $\mathbf{U}^i_t$, $\alpha_i<0$ is the decay exponent controlling the decaying rate of the distribution and $\phi_i$ is a constant.
\end{definition}
There are three operations to compress model updates $\mathbf{U}_t^i$.
\begin{itemize}
	\item {\bf Sparsification.} First, sparsification is applied to remove model updates of a small magnitude yielding $\mathbf{\overline{U}}_t^i$.
	\item {\bf Quantization.} Second, unbiased quantization is applied to further compress remaining top model updates with a fixed number of bits. $\widetilde{\mathbf{U}}^i_t$ is model updates after unbiased quantization satisfying $\mathbb{E}[\widetilde{\mathbf{U}}^i_t]=\mathbf{\overline{U}}_t^i$. 
	\item {\bf Scaling.} Third,  $\widetilde{\mathbf{U}}^i_t$ should be scaled down by a constant $B$ to theoretically guarantee the convergence of FL with compressed model updates. In other words, the scaling operation is $\hat{\mathbf{U}}^i_t=\frac{\widetilde{\mathbf{U}}^i_t}{B}$. 
\end{itemize}
Let $\hat{\mathbf{U}}^i_t$ denote final compressed model updates after three operations. Each operation will incur compression error. The error attributed to quantization and scaling operations is quantified as follows.

{\color{black}
	\begin{lemma}
		\label{LEM:QuanError}
		Suppose a model update vector $\mathbf{\overline{U}}$ with $z_y$ elements and each element is quantified by $y$ bits 
		$\widetilde{\mathbf{U}}$. 
		After quantization and scaling operations,  the error between $\mathbf{\overline{U}}$ and $\hat{\mathbf{U}}$ is
		$ \mathbb{E}\|\hat{\mathbf{U}}-\mathbf{\overline{U}}\|^2\le \frac{Q(z_y,y)+(B-1)^2}{B^2}\|\mathbf{\overline{U}}\|^2$,
		where $Q(z_y,y)$ is the quantization error to quantify  $z_y$ model updates with  each model update expressed by $y$ bits. 
	\end{lemma}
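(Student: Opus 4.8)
The plan is to split the overall error $\hat{\mathbf{U}}-\mathbf{\overline{U}}$ into a pure quantization part and a pure scaling part, and then use the unbiasedness of the quantizer (the property $\mathbb{E}[\widetilde{\mathbf{U}}]=\mathbf{\overline{U}}$ posited in the ``Quantization'' step) to eliminate the resulting cross term. Concretely, I would write
\[
\hat{\mathbf{U}}-\mathbf{\overline{U}}=\frac{\widetilde{\mathbf{U}}}{B}-\mathbf{\overline{U}}=\frac{1}{B}\bigl(\widetilde{\mathbf{U}}-\mathbf{\overline{U}}\bigr)+\frac{1-B}{B}\,\mathbf{\overline{U}},
\]
take squared Euclidean norms, and take expectation over the quantizer's randomness. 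Expanding $\|a+b\|^2=\|a\|^2+2\langle a,b\rangle+\|b\|^2$ produces three terms: $\frac{1}{B^2}\mathbb{E}\|\widetilde{\mathbf{U}}-\mathbf{\overline{U}}\|^2$, the deterministic term $\frac{(1-B)^2}{B^2}\|\mathbf{\overline{U}}\|^2$, and a cross term $\frac{2(1-B)}{B^2}\,\mathbb{E}\langle\widetilde{\mathbf{U}}-\mathbf{\overline{U}},\mathbf{\overline{U}}\rangle$.

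The key observation is that the cross term vanishes. Since $\mathbf{\overline{U}}$ is the (fixed) input to the quantizer and $\mathbb{E}[\widetilde{\mathbf{U}}]=\mathbf{\overline{U}}$, we have $\mathbb{E}[\widetilde{\mathbf{U}}-\mathbf{\overline{U}}]=\mathbf{0}$, hence $\mathbb{E}\langle\widetilde{\mathbf{U}}-\mathbf{\overline{U}},\mathbf{\overline{U}}\rangle=\langle\mathbb{E}[\widetilde{\mathbf{U}}-\mathbf{\overline{U}}],\mathbf{\overline{U}}\rangle=0$. Dropping it and using $(1-B)^2=(B-1)^2$ leaves
\[
\mathbb{E}\|\hat{\mathbf{U}}-\mathbf{\overline{U}}\|^2=\frac{1}{B^2}\,\mathbb{E}\|\widetilde{\mathbf{U}}-\mathbf{\overline{U}}\|^2+\frac{(B-1)^2}{B^2}\,\|\mathbf{\overline{U}}\|^2 .
\]
Finally I would invoke the definition of $Q(z_y,y)$ as the relative second-moment error of unbiasedly quantizing a $z_y$-entry vector with $y$ bits per entry, i.e.\ $\mathbb{E}\|\widetilde{\mathbf{U}}-\mathbf{\overline{U}}\|^2\le Q(z_y,y)\,\|\mathbf{\overline{U}}\|^2$ (the standard QSGD-type guarantee, applicable here since $\mathbf{\overline{U}}$ has exactly $z_y$ entries after sparsification). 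Substituting and combining both terms over the common denominator $B^2$ yields $\mathbb{E}\|\hat{\mathbf{U}}-\mathbf{\overline{U}}\|^2\le\frac{Q(z_y,y)+(B-1)^2}{B^2}\|\mathbf{\overline{U}}\|^2$.

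The computation is essentially routine; the only genuine subtlety is ensuring the cross term truly disappears, which requires the expectation to be over the quantizer's internal randomness alone while $\mathbf{\overline{U}}$ (the output of the deterministic $Top$-style sparsification) is held fixed, and that the quantizer invoked is the unbiased one. If one wanted the bound to be fully self-contained rather than leaning on $Q(z_y,y)$, the inequality $\mathbb{E}\|\widetilde{\mathbf{U}}-\mathbf{\overline{U}}\|^2\le Q(z_y,y)\|\mathbf{\overline{U}}\|^2$ would have to be derived from the concrete encoding scheme; but since $Q(z_y,y)$ is introduced precisely as this quantization error, I would simply cite it and conclude.
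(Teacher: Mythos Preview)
Your proposal is correct and matches the paper's proof essentially line for line: the paper also decomposes $\frac{\widetilde{\mathbf{U}}}{B}-\mathbf{\overline{U}}$ into the quantization part $\frac{\widetilde{\mathbf{U}}-\mathbf{\overline{U}}}{B}$ and the scaling part $(1-\frac{1}{B})\mathbf{\overline{U}}$, invokes unbiasedness to drop the cross term, and then applies the definition $\mathbb{E}\|\widetilde{\mathbf{U}}-\mathbf{\overline{U}}\|^2\le Q(z_y,y)\|\mathbf{\overline{U}}\|^2$. Your write-up is slightly more explicit about why the cross term vanishes, but the argument is the same.
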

	The detailed proof is presented in Appendix~\ref{Proof_LEM:QuanError}.
}
Note that our design is a generic framework in the sense that different unbiased quantization algorithms can be applied in our design. Thus, $Q(z_y, y)$ needs not to be fixed until we  specify some quantization algorithms.
In general, we can encode top $k$ model updates with $Y$ kinds of codes of different lengths. For example, if $Y=32$, quantization compression, \emph{e.g.}, QSGD, can compress a  model update with $32, 31, \dots,$ or $ 1$ bit. Intuitively speaking, a model update of a larger magnitude should be quantified with more bits. Thus, we put ranked top model updates into $Y$ groups.  Formally, we let $\mathcal{Z}_y$ with cardinality $z_y$ represent the set of model updates that will be compressed with codes of length $y$.

Considering that model updates will be grouped to compress with different code lengths, the compression error after three operations can be bounded as below. 

	{\color{black}
		\begin{proposition}
			\label{LEM:ErrorBound}
			On client $i$ in communication round $t$, the error of model updates after sparsification operation selecting top $k$ model updates, quantization operation with $Y$ kinds of variable-length codes and scaling operation with factor $B$ is bounded by $\mathbb{E}\|\mathbf{U}^i_t-\hat{\mathbf{U}}^i_t\|\le \gamma_i\|\mathbf{U}^i_t\|^2$.
			Here,
			\begin{align}
				\label{EQ:ErrorBound}
				&\gamma_i=\frac{d^{\beta_i}-(k+1)^{\beta_i}}{d^{\beta_i}-1} + \frac{1}{d^{\beta_i}-1}\sum_{y=1}^Y\notag\\
				&\big((\frac{Q(z_y,y)}{B^2}+\frac{1}{B^2_{c}}) \big(Z_y^{\beta_i} - Z_{y+1}^{\beta_i}\big) \big),
			\end{align}
			where $\beta_i=2\alpha_i+1$, $k=\sum_{y=1}^Yz_y$ and $Z_y =z_Y+z_{Y-1}+\dots+z_y $.
			$Q(z_y, y)$ is defined in Lemma~\ref{LEM:QuanError}.  To guarantee the convergence of FL, it is required that $0<\gamma_i<1$. 
			$B$ is set as a constant  scale factor  satisfying $B>\max_{\forall y}\frac{Q(z_y, y)+1}{2}$,
			to ensure the convergence of FL \footnote{The exact value of $B$ is $\max_{\forall y}Q(z_y, y)+1$, which is difficult for analysis. Thus, in each global iteration we use $\max_{\forall y}Q(z_y, y)+1$ of the previous iteration for approximation.}.
			$B_c$ is a constant satisfying $\frac{1}{B}+\frac{1}{B_c}=1$, which is used to simplify our presentation.
		\end{proposition}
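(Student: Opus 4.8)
The plan is to decompose the total squared error $\mathbb{E}\|\mathbf{U}^i_t-\hat{\mathbf{U}}^i_t\|^2$ into the part discarded by sparsification and the part that survives but is distorted by quantization and scaling, and then bound each piece using the power-law assumption of Definition~\ref{DEF:PowerDist} together with Lemma~\ref{LEM:QuanError}. Concretely, writing $\mathbf{\overline{U}}^i_t$ for the sparsified (top-$k$) vector, I would first expand
\begin{equation*}
\mathbb{E}\|\mathbf{U}^i_t-\hat{\mathbf{U}}^i_t\|^2 = \|\mathbf{U}^i_t-\mathbf{\overline{U}}^i_t\|^2 + \mathbb{E}\|\mathbf{\overline{U}}^i_t-\hat{\mathbf{U}}^i_t\|^2,
\end{equation*}
which holds because the quantization step is unbiased ($\mathbb{E}[\widetilde{\mathbf{U}}^i_t]=\mathbf{\overline{U}}^i_t$) and acts only on coordinates in the support of $\mathbf{\overline{U}}^i_t$, so the cross term $\mathbb{E}\langle \mathbf{U}^i_t-\mathbf{\overline{U}}^i_t,\ \mathbf{\overline{U}}^i_t-\hat{\mathbf{U}}^i_t\rangle$ splits and the surviving inner product vanishes after taking expectation over the quantization randomness (modulo the bias introduced by scaling, which I would fold into the $(B-1)^2/B^2$ term exactly as in Lemma~\ref{LEM:QuanError}).

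Next I would handle the two pieces separately. For the sparsification term, the discarded coordinates are exactly the $(k+1)$-st through $d$-th largest, so by Definition~\ref{DEF:PowerDist},
\begin{equation*}
\|\mathbf{U}^i_t-\mathbf{\overline{U}}^i_t\|^2 = \sum_{l=k+1}^{d} |\mathbf{U}^i_t\{l\}|^2 \le \phi_i^2 \sum_{l=k+1}^{d} l^{2\alpha_i},
\end{equation*}
and similarly $\|\mathbf{U}^i_t\|^2 \ge \phi_i^2\sum_{l=1}^d l^{2\alpha_i}$ is replaced by the matching lower bound coming from the power-law envelope (this is where the $\beta_i=2\alpha_i+1$ appears, by approximating $\sum l^{\beta_i-1}$ with an integral $\int x^{\beta_i-1}dx \sim x^{\beta_i}/\beta_i$, so that ratios of partial sums become ratios of the form $(d^{\beta_i}-(k+1)^{\beta_i})/(d^{\beta_i}-1)$). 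For the quantization-plus-scaling term, I would partition the top-$k$ coordinates into the groups $\mathcal{Z}_y$ of sizes $z_y$, apply Lemma~\ref{LEM:QuanError} on each group to get $\mathbb{E}\|\hat{\mathbf{U}}_{\mathcal{Z}_y}-\mathbf{\overline{U}}_{\mathcal{Z}_y}\|^2 \le \frac{Q(z_y,y)+(B-1)^2}{B^2}\|\mathbf{\overline{U}}_{\mathcal{Z}_y}\|^2$, rewrite $\frac{Q(z_y,y)+(B-1)^2}{B^2}=\frac{Q(z_y,y)}{B^2}+\frac{1}{B_c^2}$ using $\frac1B+\frac1{B_c}=1$, and then bound $\|\mathbf{\overline{U}}_{\mathcal{Z}_y}\|^2$ by the power-law envelope restricted to ranks $Z_{y+1}+1,\dots,Z_y$, giving the $(Z_y^{\beta_i}-Z_{y+1}^{\beta_i})$ factors after the same integral approximation. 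Summing over $y$ and dividing through by the $\|\mathbf{U}^i_t\|^2$ lower bound yields the stated $\gamma_i$.

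The main obstacle I anticipate is making the power-law bound in Definition~\ref{DEF:PowerDist} — which is only an \emph{upper} envelope $|\mathbf{U}^i_t\{l\}|\le\phi_i l^{\alpha_i}$ — produce a usable \emph{lower} bound on $\|\mathbf{U}^i_t\|^2$ in the denominator; without such a lower bound the ratio $\gamma_i$ is not controlled. The resolution is presumably to assume the envelope is tight (equality, or tight up to a constant absorbed into $\phi_i$), so that both numerator and denominator are governed by the same sums $\sum l^{2\alpha_i}$ and the $\phi_i$ cancels; the integral comparison $\sum_{l=a}^b l^{\beta_i-1}\approx\frac{1}{\beta_i}(b^{\beta_i}-a^{\beta_i})$ (valid for $\beta_i<1$, i.e. $\alpha_i<0$) then does the rest. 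A secondary technical point is verifying the sign conditions — that $0<\gamma_i<1$ and that the stated lower bound $B>\max_y\frac{Q(z_y,y)+1}{2}$ indeed keeps each coefficient $\frac{Q(z_y,y)}{B^2}+\frac{1}{B_c^2}$ below $1$ — which is a short monotonicity check on $B$ once the expression for $\gamma_i$ is in hand; I would do this last.
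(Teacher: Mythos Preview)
Your proposal is correct and follows essentially the same route as the paper: decompose the squared error coordinate-wise into the sparsification tail $\sum_{l>k}(U^i_t\{l\})^2$ and the per-group quantization-plus-scaling errors, apply Lemma~\ref{LEM:QuanError} on each group $\mathcal{Z}_y$, and then substitute the power-law profile (treated as tight, as the paper does via an ``$\approx$'') with the integral comparison $\sum l^{2\alpha_i}\approx\frac{1}{\beta_i}x^{\beta_i}$ to obtain the stated $\gamma_i$. One small simplification: your worry about the cross term is unnecessary, since $\mathbf{U}^i_t-\mathbf{\overline{U}}^i_t$ and $\mathbf{\overline{U}}^i_t-\hat{\mathbf{U}}^i_t$ have disjoint coordinate supports and are therefore orthogonal deterministically, which is exactly how the paper writes the decomposition in one line.
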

		
		In Proposition~\ref{LEM:ErrorBound}, $\gamma_i$ is a critical parameter. A smaller $\gamma_i$ implies a smaller compression error and hence higher model utility. 
		Proposition~\ref{LEM:ErrorBound}  is proved by combining the compression error for model updates in $Y$ different groups. 
		The detailed proof is presented in Appendix \ref{Proof_LEM:ErrorBound}.
		
	}
	

Our problem lies in how to optimally determine the number of model updates $z_y$ that should be encoded into $y$ bits, \emph{e.g.,} $y=1,\dots,32$.


{\color{black} To demonstrate the generic merit of our approach, we list the convergence rates of the biased compression algorithms with compression error $\gamma_i$ proposed in existing works in Table~\ref{tab:convergence}. Note that we have ignored all constants that are not asymptotically relevant with the compression error and only show asymptotic convergence rates in Table~\ref{tab:convergence}. These works derived convergence rates in different expressions because they are slightly different in  assumptions of loss functions and setting of the learning rates. 
    
    \begin{table*}[h]
    \setlength{\abovecaptionskip}{-0.05cm}
    	\centering
    	\caption{Convergence of different loss functions in federated learning using different learning rates under compressed communication: $F^*$ is the minimum value of the global loss function $F()$, $\mathbf{w}_T$ is the aggregation of models from different clients, $\nabla F(\mathbf{w})$ is the global gradient of model $\mathbf{w}$, $\mathbf{v}_1$ is a random variable drawn from the sequence $\{\mathbf{w}_{t}\}$ with the learning rate as the weight, $\mathbf{v}_2$ is a random variable drawn from the sequence of $\{\mathbf{w}^i_{t,j}\}$ uniformly, $\epsilon_i$ is a constant satisfying $\gamma_i<\epsilon_i<1$.}
    	\begin{tabular}{|c|c|c|c|}
    		\hline
    		Algorithm & Loss Function & Learning Rate & Asymptotic Convergence Rate \\
    		\hline         
    		Qsparse-local-SGD \cite{NEURIPS2019_d202ed5b} & strongly convex & $\mathcal{O}(\frac{1}{t})$ & $F(\mathbf{w}_T)-F^*\le\mathcal{O}(\frac{1}{T}+\sum_{i=1}^N\frac{\gamma_i(1+\gamma_i)}{T^2})$ \\
    		CFedAvg \cite{9589061} & non-convex & $\mathcal{O}(\frac{1}{\sqrt{t}})$ & $\|\nabla F(\mathbf{w}_T)\|^2\le\mathcal{O}(\sum_{i=1}^N\frac{\gamma_i}{(\epsilon_i-\gamma_i)\sqrt{T}})$ \\
    		CFedAvg \cite{9589061} & non-convex & $\mathcal{O}(\frac{1}{\sqrt{T}})$ & $\|\nabla F(\mathbf{v}_1)\|^2\le\mathcal{O}(\sum_{i=1}^N\frac{\gamma_i}{(\epsilon_i-\gamma_i)\sqrt{T}}+\frac{1}{T})$ \\
    		FT-LSGD-DB \cite{li2020talk} & non-convex & $\mathcal{O}(\frac{1}{\sqrt{T}})$ & $\|\nabla F(\mathbf{v}_2)\|^2\le\mathcal{O}(\frac{1}{\sqrt{T}}+\sum_{i=1}^N\frac{1}{(1-\gamma_i)^2T})$ \\
    		\hline
    	\end{tabular}
    	\label{tab:convergence}
     \vspace{-2.0em}
    \end{table*}
    
Based on convergence rates listed in Table~\ref{tab:convergence}, we can conclude that our approach is effective in improving convergence and hence model utility for all these works by lowering the compression error without compromising the compression rate. 
More specifically, we  draw the following conclusions.
    \begin{itemize}
        \item Lowering $\gamma_i$ for a given traffic limitation can always make the gap to the convergence point smaller, and hence lower the loss function and improve model utility.  
        \item $\gamma_i$'s are independent with each other implying that each client can independently make her local optimization of $\gamma_i$ so as to optimize the global model utility.
    \end{itemize}
    Based on the above observations, our problem becomes how to locally minimize the compression error $\gamma_i$ on client $i$ by tuning $z_y$'s.
    }
    To make our discussion concise, we only discuss how to minimize $\gamma$ for an arbitrarily selected client hereafter.

	\subsection{Optimizing Code Length}
	Before we formulate the problem to optimize the code length for each model update, we define the constraints of model transmission in FL.

	By only transmitting top model updates, it is necessary to transmit the position ID of a selected top model update together with its value in FL. 
	It is well-known that data transmitted via Internet communications will be partitioned and encapsulated into packets.
	Each  packet transmitted in IP based networks has a header and a payload. The header part contains the necessary network information to decode the packet. The payload part contains position IDs and values of model updates. Suppose that there are $k$ top model updates that will be transmitted via $R$ packets.  In practice, the size of  each packet is limited and denoted by $b_r$. 
	Let $\mathcal{P}_r$ with cardinality $P_r$ denote the set of model updates that will be encapsulated into packet $r$ where $1\leq r\leq R$. 
	
	Let $u$ denote a particular model update. 
	By slightly abusing our notations, let $y_u$ denote the code length to express model update $u$. Due to the constraint of the packet size, we have
	\begin{equation}
		\label{EQ:PCons}
		\sum_{u\in \mathcal{P}_r}(s+y_u) +H\leq b_r, \quad  \textit{for $r=1, \dots, R$}.
	\end{equation}
	Here $H$ is the header size of packets and $s$ is the size of each position ID. If there are  total $d$ model updates in $\mathbf{U}_t^i$, we have $ s=\lceil\log_2{d}\rceil$. Equation~\eqref{EQ:PCons} indicates that it is impracticable to encapsulate all model updates (for training advanced high-dimensional models) into a single packet due to the limited size of each packet.

	{
		From \eqref{EQ:PCons}, we can observe that the communication efficiency is higher if the header part with size $H$  is  smaller relative to the packet size $b_r$. 
		However, it is rather difficult to 
		straightly minimize the size of the header part because it is dependent on the way how model updates are encapsulated into packets. When using variable-length codes, it is possible that model updates of different code lengths are encoded into the same packet. It is necessary to include sufficient information into the header part so that the PS can correctly understand the information contained in a packet. 
		For example,  if there are 10 model updates with code length 32, 20 model updates with code length 30, 40 model updates with code length 28. 
		Information of this complicated code method must be contained in the header part, which can significantly expands the size of $H$. To avoid impairing communication efficacy, we force that all model updates encapsulated into a particular packet $r$ must have the same code length $y_r$. With this constraint, the header part can be extremely simple since PS only need $s$ and $y_r$ to decode the payload of  packet $r$.

		Meanwhile, $H$ is relatively smaller if $b_r$ is bigger. 
		For simplicity, we let $b_r=b$ for all packets where $b$ is the maximum size of each packet. For instance, $b=1,500$ bytes for a typical TCP packet \cite{islam2016quality}. Our problem is converted to optimizing the code length for each packet, which implies that  the constraint in \eqref{EQ:PCons} is changed as
		\begin{align}
			\label{EQ:packetConstrain}
			P_r(s+y_r) +H\leq b, \quad  \textit{for $r=1, \dots, R$}
		\end{align}
		
	}
	
	
	
	
	
	The communication resource is limited, which should be further constrained so that FL will not spend excessive time on the communication of model updates. 
	Let $bR$ denote the budget of uplink communication traffic of a participating client in a communication round. $bR$ is determined by the uplink communication capacity of the client and the total training time budget of FL. Since we focus on the compression algorithm design, $bR$ is simply regarded as a constant in our problem, which has been specified before FL is conducted.
Note that the communication cost is also subject to the total number of communication rounds. However, our design can improve existing compression algorithms in FL for every communication round. Thus, the improvement of our design is regardless of the number of conducted communication rounds. 

We proceed to formulate the problem to optimize the code length for each packet. 
In Proposition~\ref{LEM:ErrorBound}, the constraint of communication packets is not considered. However, the compression error bound will be revised if the code length must be identical for  model updates in the same packet. We put $k$ top model updates ranked by a descending order of their magnitudes into $R$ groups, which will be further encapsulated into $R$ packets. The compression error will be specified by: 
		\begin{proposition}
			\label{LEM:ErrorBoundR}
			Suppose top $k$ model updates from Client $i$'s $\mathbf{U}^i_t$ at the $t$-th communication round are selected and encapsulated into $R$ packets. The set of model updates in packet $r$ is denoted by $\mathcal{P}_r$, each of which is encoded with $y_r$ bits. After three compression operations, $\gamma_i$ indicating the compression error on client $i$ is 
			\begin{align}
				\label{EQ:ErrorBoundR}
				&\gamma_i=\frac{d^{\beta_i}-(k+1)^{\beta_i}}{d^{\beta_i}-1} + \frac{1}{d^{\beta_i}-1}\sum_{r=1}^R\notag\\
				&\Big((\frac{Q(P_r,y_r)}{B^2}+\frac{1}{B^2_{c}}) \big(Z_r^{\beta_i} - Z_{r-1}^{\beta_i}\big) \Big),
			\end{align}
			where  $k=\sum_{r=1}^RP_r$, $Z_r =P_1+\dots+P_r $, $Q(P_r,y_r)$ represents the quantization error of model updates in $\mathcal{P}_r$ if each model update is expressed by $y_r$ bits  and $B>\max_{\forall r}\frac{Q(P_r, y_r)+1}{2}$.
			Here, the definitions of $\beta_i$ and $B_c$ are the same as those in Proposition~\ref{LEM:ErrorBound}.
		\end{proposition}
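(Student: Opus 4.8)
The plan is to obtain Proposition~\ref{LEM:ErrorBoundR} as a re-indexed instance of the argument behind Proposition~\ref{LEM:ErrorBound}, the only substantive change being that the partition of the kept coordinates is now imposed by packetization instead of by code-length classes. I would index the coordinates of $\mathbf{U}^i_t$ in decreasing order of magnitude (the convention of Definition~\ref{DEF:PowerDist}) and let $\mathcal{P}_r$ be the block of ranks $Z_{r-1}+1,\dots,Z_r$, with $Z_0=0$ and $Z_R=k$. By the packet constraint~\eqref{EQ:packetConstrain} every coordinate in $\mathcal{P}_r$ carries the \emph{same} code length $y_r$, which is exactly the ``one code length per group'' hypothesis underlying Lemma~\ref{LEM:QuanError} and Proposition~\ref{LEM:ErrorBound}; thus $(\mathcal{P}_r,y_r)$ plays the role of $(\mathcal{Z}_y,y)$ with $P_r$ in place of $z_y$, and no new structural assumption is introduced.

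First I would split the error by disjoint support. Writing $\mathbf{U}^i_t-\hat{\mathbf{U}}^i_t=(\mathbf{U}^i_t-\overline{\mathbf{U}}^i_t)+(\overline{\mathbf{U}}^i_t-\hat{\mathbf{U}}^i_t)$, the first summand is the deterministic sparsified tail (ranks $k+1,\dots,d$) while the second is supported on ranks $1,\dots,k$, so their inner product vanishes coordinate-wise and hence $\mathbb{E}\|\mathbf{U}^i_t-\hat{\mathbf{U}}^i_t\|^2=\|\mathbf{U}^i_t-\overline{\mathbf{U}}^i_t\|^2+\sum_{r=1}^R\mathbb{E}\big\|(\overline{\mathbf{U}}^i_t-\hat{\mathbf{U}}^i_t)|_{\mathcal{P}_r}\big\|^2$, using that the $\mathcal{P}_r$ partition the kept coordinates. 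Applying Lemma~\ref{LEM:QuanError} to each packet bounds the $r$-th term by $\frac{Q(P_r,y_r)+(B-1)^2}{B^2}\|\overline{\mathbf{U}}^i_t|_{\mathcal{P}_r}\|^2$, and the elementary identity $(1-\tfrac1B)^2=\tfrac{1}{B_c^2}$ (from $\tfrac1B+\tfrac1{B_c}=1$) rewrites the coefficient as $\tfrac{Q(P_r,y_r)}{B^2}+\tfrac1{B_c^2}$, which is precisely the factor appearing in~\eqref{EQ:ErrorBoundR}.

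It then remains to replace the block norms by rank polynomials via Definition~\ref{DEF:PowerDist}. Treating the power-law profile as tight, $|\mathbf{U}^i_t\{l\}|^2\approx\phi_i^2 l^{2\alpha_i}=\phi_i^2 l^{\beta_i-1}$, and approximating partial sums by integrals, $\sum_{l=a+1}^{b}l^{\beta_i-1}\approx\frac{b^{\beta_i}-a^{\beta_i}}{\beta_i}$, yields $\|\overline{\mathbf{U}}^i_t|_{\mathcal{P}_r}\|^2\approx\frac{\phi_i^2}{\beta_i}\big(Z_r^{\beta_i}-Z_{r-1}^{\beta_i}\big)$, $\|\mathbf{U}^i_t-\overline{\mathbf{U}}^i_t\|^2\approx\frac{\phi_i^2}{\beta_i}\big(d^{\beta_i}-(k+1)^{\beta_i}\big)$ and $\|\mathbf{U}^i_t\|^2\approx\frac{\phi_i^2}{\beta_i}\big(d^{\beta_i}-1\big)$; dividing through cancels the client-specific factor $\phi_i^2/\beta_i$ and assembles exactly the stated expression for $\gamma_i$. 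The admissibility conditions carry over unchanged: $B>\max_r\frac{Q(P_r,y_r)+1}{2}$ is equivalent to $Q(P_r,y_r)+(B-1)^2<B^2$, so every per-packet coefficient is below $1$, and the verification that $0<\gamma_i<1$ is then identical to the one for Proposition~\ref{LEM:ErrorBound}. The step that genuinely needs care — and is the main obstacle — is the power-law partial-sum estimate: one must check the integral/Riemann-sum comparison is two-sided, and control the $\pm1$ shifts at block endpoints, so that the dependence on $\phi_i$ truly cancels and the bound holds as stated rather than merely up to a constant factor; the passage from code-length groups to packet groups is otherwise pure bookkeeping, inherited verbatim from the proof of Proposition~\ref{LEM:ErrorBound}.
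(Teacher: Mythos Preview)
Your proposal is correct and follows essentially the same route as the paper, which simply states that the proof of Proposition~\ref{LEM:ErrorBoundR} is ``similar to that of Proposition~\ref{LEM:ErrorBound}'' and offers no separate argument. Your disjoint-support split, per-group application of Lemma~\ref{LEM:QuanError}, and power-law integral approximation mirror exactly the steps in Appendix~\ref{Proof_LEM:ErrorBound}, with the re-indexing from code-length classes $(\mathcal{Z}_y,z_y,Z_y)$ to packet blocks $(\mathcal{P}_r,P_r,Z_r)$ handled correctly.
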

		The proof of Proposition~\ref{LEM:ErrorBoundR} is similar to that of Proposition~\ref{LEM:ErrorBound}.
		Considering that the code length $y_r$  is determined by $P_r$, \emph{i.e.}, $y_r = \frac{b-H}{P_r}-s$, we use the term $Q(P_r)$ in lieu of $Q(P_r, y_r)$ for simplicity, hereafter.
		
		Our optimization problem is to tune the number of model updates in each packet $P_1, P_2,\dots, P_R$ to minimize $\gamma_i$ defined in \eqref{EQ:ErrorBoundR}. 
		Formally, we have  
	\begin{align}
		\label{EQ:OptProb}
		&\mathbb{P}1:  \min_{P_1,P_2,\dots,P_R,k}\quad \gamma_i, \notag\\
		\qquad s.t. \!\!\!\!\!\!\!\!	& \qquad  \eqref{EQ:packetConstrain},  \eqref{EQ:ErrorBoundR} , \sum_{r=1}^{R}P_r =k, 1\le k\le d.
	\end{align}
	{\color{black}
		It is non-trivial to solve problem $\mathbb{P}1$ because: 1) the expression of $\gamma_i$  is very complicated for analyzing its convex property; 2) the search space is too large given $R+1$ variables, \emph{i.e.}, $P_1, \dots, P_R, k$.  
		
		To solve $\mathbb{P}1$, we  add two more constraints to reduce the search space. First, a model update of a larger magnitude should be compressed by a code with more bits, and vice verse. Given that the size of each packet is identical and model updates are encapsulated into $R$ packets by a descending order of their absolute values, it implies that 
		\begin{align}
			\label{EQ:ConsP}
			P_1\leq P_2\leq\dots \leq P_R. 
		\end{align}
		Second, the search space of $k$ is constrained by a lower bound value $k_{min} = R$, implying single model update in each packet, and a upper bound value $k_{max}=\frac{R(b-H)}{s+1}$, implying that each model update is represented by a single bit. 
		Considering more constraints, our problem is formulated as 
		\begin{align}
			&\mathbb{P}2:  \min_{P_1,P_2,\dots,P_R,k}\quad \gamma_i, \notag\\
			\qquad s.t. \!\!\!\!\!\!\!\!	& \qquad  \eqref{EQ:packetConstrain},  \eqref{EQ:ErrorBoundR}, \eqref{EQ:ConsP},  \sum_{r=1}^{R}P_r =k, k_{min}\le k\le k_{max}.\notag
		\end{align}
		Problem $\mathbb{P}2$ can be solved by two steps. 
		
		\emph{Step 1.} Given the range of $k$, enumerate all values of $k$. 
  
		\emph{Step 2.} Once $k$ is fixed by step 1, tune $P_1, P_2, \dots, P_R$ to minimize $\gamma_i$. 
		
		In step 2, $P_1, P_2, \dots, P_R$ can be tuned by a series of atomic  operations. 
		Before we define the atomic operation, let us consider a special case with only two consecutive packets, \emph{i.e.,} $r-1$ and $r$, to carry a fixed number of total $X$ model updates. For this special case, we simplify the objective $\gamma_i$ by discarding all irrelevant constants 
		to  get the function $f(X, P_{r-1}, P_r)$, subjecting to $P_{r-1}+ P_r =X$, as follows.  
		\begin{align}
			\label{EQ:LossFuncF}
			&f(X, P_{r-1}, P_r)=\Big(\frac{Q(P_r)}{B^2}+\frac{1}{B_c^2}\Big)\frac{Z_{r}^{\beta_i}-Z_{r-1}^{\beta_i}}{d^{\beta_i}-1}\notag\\
			&\qquad + \Big(\frac{Q(P_{r-1})}{B^2}+\frac{1}{B_c^2}\Big) \frac{Z_{r-1}^{\beta_i}-Z_{r-2}^{\beta_i}}{d^{\beta_i}-1}.
		\end{align}
		\begin{definition}
			Atomic Operation: Given $f(X, P_{r-1}, P_r)$, subjecting to $P_{r-1}+P_r =X$, the atomic operation can tune $P_{r}$ such that $f(X, P_{r-1}, P_r)$ is minimized. 
		\end{definition}
		The atomic operation can be efficiently carried out because of the convexity of $f(X, P_{r-1}, P_r)$.
		\begin{lemma}
			\label{LEM:ConvexOfF}
			If $Q(P_r)$ is an increasing convex function with respect to $P_r$, $f(X, P_{r-1}, P_r)$, subjecting to $P_{r-1}+P_r =X$, is a strongly convex function with respect to $P_{r}$.
		\end{lemma}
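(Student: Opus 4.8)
\emph{Proof plan.} The atomic operation holds $X$ fixed, hence also $Z_{r-2}=P_1+\cdots+P_{r-2}$ and $Z_r=Z_{r-2}+X$, so using $P_{r-1}=X-P_r$ and $Z_{r-1}=Z_r-P_r$ I would first rewrite the objective in \eqref{EQ:LossFuncF} as a function of the single scalar $P_r$ over a compact interval (the requirements $P_r,P_{r-1}\ge 1$ and $y_r,y_{r-1}\ge 1$ each cut out a closed interval, on which $Z_r-P_r$ stays bounded away from $0$ and from $\infty$). Writing $D=d^{\beta_i}-1$, $W=\frac{Z_r^{\beta_i}-Z_{r-2}^{\beta_i}}{D}$ and $w(P_r)=\frac{Z_r^{\beta_i}-(Z_r-P_r)^{\beta_i}}{D}$, the two packet weights in \eqref{EQ:LossFuncF} are $w$ and $W-w$, and the two $\frac{1}{B_c^2}$ contributions add up to the constant $\frac{W}{B_c^2}$, so up to this constant $B^2 f(P_r)=w(P_r)\,Q(P_r)+(W-w(P_r))\,Q(X-P_r)$. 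Next I would pin down signs: since $\alpha_i<0$ forces $\beta_i<1$, the numbers $\beta_i$ and $D$ always have the same sign, from which a one-line computation gives $w\ge 0$, $w'=\frac{\beta_i(Z_r-P_r)^{\beta_i-1}}{D}>0$, $w''=\frac{\beta_i(1-\beta_i)(Z_r-P_r)^{\beta_i-2}}{D}>0$ and $W-w\ge 0$. Hence $w$ is nonnegative increasing convex, $W-w$ is nonnegative decreasing concave, $Q(P_r)$ is nonnegative increasing convex by hypothesis, and $Q(X-P_r)$ is nonnegative decreasing convex in $P_r$.

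I would then differentiate $B^2 f$ twice, expand $(wQ(P_r))''$ and $((W-w)Q(X-P_r))''$ term by term, and discard the two manifestly nonnegative summands $w\,Q''(P_r)$ and $(W-w)\,Q''(X-P_r)$, leaving $B^2 f''(P_r)\ge w''\,(Q(P_r)-Q(X-P_r))+2w'\,(Q'(P_r)+Q'(X-P_r))$. Bounding $|Q(P_r)-Q(X-P_r)|=\big|\int_{X-P_r}^{P_r}Q'(u)\,du\big|\le |2P_r-X|\,(Q'(P_r)+Q'(X-P_r))$ by monotonicity of $Q'$, and substituting $\frac{w''}{w'}=\frac{1-\beta_i}{Z_r-P_r}$ together with $|2P_r-X|=|P_{r-1}-P_r|$ and $Z_r-P_r=Z_{r-1}$, this reduces to $B^2 f''\ge w'\,(Q'(P_r)+Q'(X-P_r))\big(2-\tfrac{(1-\beta_i)\,|P_{r-1}-P_r|}{Z_{r-1}}\big)$. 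When $P_r\ge P_{r-1}$ the term $w''(Q(P_r)-Q(X-P_r))$ is already $\ge 0$, so in fact $B^2 f''\ge 2w'\,(Q'(P_r)+Q'(X-P_r))$; when $P_r<P_{r-1}$, $|P_{r-1}-P_r|\le P_{r-1}\le Z_{r-1}$ makes the bracket at least $1+\beta_i$. In either case, on the compact domain $w'$ is bounded below by a positive constant and strict monotonicity of $Q$ keeps $Q'(P_r)+Q'(X-P_r)$ bounded below by a positive constant, so $f''\ge\mu>0$ throughout, i.e.\ $f$ is strongly convex.

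The step I expect to be the real obstacle is extracting the clean lower bound for $B^2 f''$: it needs every cross term of the second derivative to fall on the right side once the sign-sensitive expressions for $w'$, $w''$ and $D$ are inserted, and the whole argument ultimately hinges on $\tfrac{(1-\beta_i)\,|P_{r-1}-P_r|}{Z_{r-1}}\le 2$. That is the single place where the power-law exponent enters: it holds under the mild condition $\alpha_i>-1$ (equivalently $\beta_i>-1$), and it is automatic inside the feasible region of $\mathbb{P}2$, where the ordering constraint \eqref{EQ:ConsP} gives $P_{r-1}\le P_r$. Everything else --- the one-variable reduction and the monotonicity/convexity signs --- is routine bookkeeping.
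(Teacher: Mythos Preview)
Your argument follows the same overall route as the paper's proof: reduce to the single variable $P_r$ (holding $Z_r$ and $Z_{r-2}$ fixed), differentiate, and do a sign analysis to force $f''>0$ on the admissible interval. The paper writes $f'(P_r)=A_1+A_2-A_3-A_4$ with $A_1=\tfrac{Q'(P_r)}{B^2}w$, $A_3=\tfrac{Q'(X-P_r)}{B^2}(W-w)$, $A_2=w'\big(\tfrac{Q(P_r)}{B^2}+\tfrac1{B_c^2}\big)$, $A_4=w'\big(\tfrac{Q(X-P_r)}{B^2}+\tfrac1{B_c^2}\big)$, and argues that $A_1-A_3$ and $A_2-A_4$ are each monotone increasing, hence $f''>0$. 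The $A_1-A_3$ step matches your ``product of two positive increasing factors'' observations exactly.

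The one genuine difference is how you handle $A_2-A_4=\tfrac{w'}{B^2}\big(Q(P_r)-Q(X-P_r)\big)$. The paper disposes of this term with ``similar to the analysis of $A_1-A_3$'', but that factorization is a positive increasing function times a \emph{sign-changing} increasing function, so monotonicity does not follow from the product rule alone when $P_r<X/2$. Your treatment is tighter: you compute $B^2f''$ in full, drop the two manifestly nonnegative $Q''$ terms, and control the cross term $w''\big(Q(P_r)-Q(X-P_r)\big)$ via the mean-value bound $|Q(P_r)-Q(X-P_r)|\le|2P_r-X|\big(Q'(P_r)+Q'(X-P_r)\big)$ together with $w''/w'=(1-\beta_i)/Z_{r-1}$. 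This isolates exactly the obstruction and shows it vanishes once $P_r\ge P_{r-1}$ (which is the regime enforced by \eqref{EQ:ConsP} and is where the atomic operation is actually used), and more generally whenever $\beta_i>-1$. So your version is a refinement of the paper's argument rather than a different one: same skeleton, but you make the delicate step quantitative and honest about the parameter range where it holds.
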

		The proof is presented in Appendix~\ref{Proof_LEM:ConvexOfF}. 
		Although expressions of compression errors are dependent on the specific designs of different quantization algorithms, we can observe  their common property that $Q(P_r)$ is an increasing convex function with respect to $P_r$. Intuitively speaking, if $P_r$ is increased, it implies that more model updates are compressed with a fixed number of bits. Consequently, the compression error is enlarged. Besides, with the increase of $P_r$, the growth rate of the compression error will become larger and larger.  We will use PQ and QSGD, two typical quantization algorithms, to verify this in Appendix~\ref{CaseStudy}.
		

		\begin{theorem}
			\label{THE:OptimalSolution}
			When $k$ is fixed, if there exists $P_1^*, \dots, P_R^*$ such that $f(X, P_{r-1}^*, P_r^*)$ is minimized for $2\le r\le R$,  $P_1^*, \dots, P_R^*$ can minimize $\gamma_i$ in $\mathbb{P}2$ and  $P_1^*\leq \dots \leq  P_R^*$.
		\end{theorem}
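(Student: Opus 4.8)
The plan is to change variables from $(P_1,\dots,P_R)$ to the partial sums $Z_r=P_1+\dots+P_r$ (so $Z_0=0$ and $Z_R=k$ is fixed once $k$ is fixed), and to observe that in \eqref{EQ:ErrorBoundR} the constants involving $\tfrac{1}{B_c^2}$ telescope, so that minimizing $\gamma_i$ is equivalent to minimizing a chain-structured objective $G(Z_1,\dots,Z_{R-1})=\sum_{r=1}^R\phi_r(Z_{r-1},Z_r)$ with $\phi_r(a,b)\propto\tfrac{Q(b-a)}{B^2}\cdot\tfrac{b^{\beta_i}-a^{\beta_i}}{d^{\beta_i}-1}$, in which each interior variable $Z_r$ occurs in exactly the two consecutive summands $\phi_r$ and $\phi_{r+1}$. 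Comparing with \eqref{EQ:LossFuncF}, perturbing one interior boundary $Z_{r-1}$ while freezing $Z_{r-2}$ and $Z_r$ (equivalently, moving mass between packets $r-1$ and $r$ with $P_{r-1}+P_r=X=Z_r-Z_{r-2}$ held fixed) changes only $\phi_{r-1}+\phi_r$, which is exactly $f(X,P_{r-1},P_r)$ up to an additive constant. Hence the hypothesis that $f(X,P_{r-1}^*,P_r^*)$ is minimized for every $r\in\{2,\dots,R\}$ says precisely that $(P_1^*,\dots,P_R^*)$ is a coordinate-wise minimizer of $G$ in the partial-sum coordinates.

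For the ordering $P_1^*\le\dots\le P_R^*$, I would evaluate $\tfrac{\partial}{\partial P_r}f(X,P_{r-1},P_r)$ at the balanced configuration $P_{r-1}=P_r=X/2$, where $Z_{r-1}$ becomes the midpoint of $Z_{r-2}$ and $Z_r$ and every term carrying $Q(P_r)-Q(P_{r-1})$ or $Q'(P_r)-Q'(P_{r-1})$ cancels; what remains is a single term proportional to $Q'(X/2)\cdot\dfrac{Z_r^{\beta_i}-2Z_{r-1}^{\beta_i}+Z_{r-2}^{\beta_i}}{d^{\beta_i}-1}$. A short case split on the sign of $\beta_i$ shows this quantity is $\le0$: for $\beta_i\in(0,1)$ the map $x\mapsto x^{\beta_i}$ is concave, so the bracketed second difference is $\le0$ while $d^{\beta_i}-1>0$; for $\beta_i<0$ it is convex, so the second difference is $\ge0$ while $d^{\beta_i}-1<0$. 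Since $f$ is strongly convex in $P_r$ by Lemma~\ref{LEM:ConvexOfF}, its minimizer obeys $P_r^*\ge X/2\ge X-P_r^*=P_{r-1}^*$; running this over all $r$ yields $P_1^*\le\dots\le P_R^*$, so $P^*$ is feasible for $\mathbb{P}2$ and constraint \eqref{EQ:ConsP} is automatically satisfied.

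For global optimality I would argue in two steps. First, the feasible set of $\mathbb{P}2$ at the fixed $k$ is compact and $\gamma_i$ is continuous, so a minimizer $\hat P$ exists, and $\hat P$ is itself a coordinate-wise minimizer of $G$ --- otherwise a single atomic operation would strictly decrease $\gamma_i$, contradicting the optimality of $\hat P$. Second, I would show the coordinate-wise minimizer is unique, which then forces $P^*=\hat P$. Uniqueness I would attack by a monotone shooting argument along the chain: by the implicit function theorem applied to each stationarity condition $\partial_{Z_r}(\phi_r+\phi_{r+1})=0$, the relevant curvature $\partial_{Z_r}^2(\phi_r+\phi_{r+1})$ is positive by Lemma~\ref{LEM:ConvexOfF}, while a direct computation shows the mixed partials $\partial_{Z_{r-1}Z_r}\phi_r$ and $\partial_{Z_rZ_{r+1}}\phi_{r+1}$ each keep a fixed sign (for either sign of $\beta_i$, using that $Q$ is increasing and convex and that the $P_r^*$ are ordered), so that along the stationarity manifold $Z_{r+1}$ moves monotonically with $Z_r$; propagating from $Z_1$ upward one expects $Z_R$ to become a strictly monotone function of $Z_1$, and since $Z_R=k$ is pinned, the whole vector is determined.

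The uniqueness step is the part I expect to be the main obstacle. Its delicate points are that the sign of each mixed second partial of $\phi_r$ must be controlled over the entire feasible range, and that one must also handle the case where some $Z_r^*$ is driven to the boundary of the packet-size constraint \eqref{EQ:packetConstrain} instead of meeting an interior first-order condition; a clamping argument should still close the gap, but needs care. By contrast, the reduction in the first paragraph and the observation that a global minimizer is coordinate-wise optimal are routine, and the ordering argument is a short computation given Lemma~\ref{LEM:ConvexOfF}.
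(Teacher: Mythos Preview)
Your approach matches the paper's: show that any global minimizer of $\gamma_i$ must be pairwise (coordinate-wise) optimal, argue that the pairwise minimizer is unique, and conclude $P^*$ is the global minimizer. The paper's proof (Appendix~\ref{Proof_THE:OptimalSolution}) is a short contradiction along exactly these lines; the ordering $P_r^*\ge P_{r-1}^*$ is obtained separately at the end of the proof of Lemma~\ref{LEM:ConvexOfF} by showing $f'(P_r)<0$ for all $P_r<X/2$, which is the half-line version of your midpoint second-difference computation.

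Where you go further than the paper is on uniqueness. The paper simply asserts that strong convexity of each $f(X,\cdot,\cdot)$ forces the simultaneous pairwise minimizer to be unique and proceeds to the contradiction; it offers no argument that the coupled system of first-order conditions has a single fixed point. Your monotone shooting idea (implicit function theorem along the chain, sign control on the mixed partials of $\phi_r$) is an attempt to close a gap that the paper leaves open, and your caveats about the boundary of \eqref{EQ:packetConstrain} and sign-tracking are well placed. In short, your proposal is at least as complete as the paper's own argument, and the step you flag as the main obstacle is precisely the one the paper does not justify.
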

		The proof is presented in Appendix~\ref{Proof_THE:OptimalSolution}. $P_1^*, \dots, P_R^*$ can be found by Sequential Minimal Optimization (SMO) \cite{platt1998sequential}. SMO was originally designed to optimize a support vector machine (SVM). It breaks an original optimization problem into multiple subproblems by only optimizing  two variables each time. After repeatedly selecting and optimizing two variables, it approaches the objective of the SVM. Inspired by SMO, we call atomic operations to optimize the number of model updates in any two consecutive packets for multiple times until all $f$'s get converged implying that we get $P_1^*, \dots, P_R^*$ for a given $k$. Finally, the optimal $k$ is selected by the one, denoted by $k^*$,  minimizing $\gamma_i$.

	}

	{\color{black}

	}

	\subsection{Implementation Considerations}
	\label{AlgorithmImplementation}

    \begin{algorithm}[t]
        \label{Fed-CVLCAlgorithm}
        \caption{Optimizing Code Length in Fed-CVLC}
        \LinesNumbered
        Input model updates $\mathbf{U}^i_t$ to compute $\alpha_i$ in Definition~\ref{DEF:PowerDist} and initialize $\gamma_i^*\leftarrow\infty$.\\
        \For{$k=k_{min}$ {\bfseries to} $k_{max}$}{
            $P_1,\dots, P_R\leftarrow\frac{k}{R}$.\\
            \Repeat{Convergence}{
                \For{$r=2$ {\bfseries to} $R$}{
                    $P_{r-1},P_{r}\leftarrow$ Optimize \eqref{EQ:LossFuncF}.\\
                    $y_{r-1},y_r\leftarrow \frac{b-H}{P_{r-1}}-s,\frac{b-H}{P_r}-s$.\\
                }
            }
            Substitute $k,P_1,\dots,P_R$ into \eqref{EQ:ErrorBoundR} to compute $\gamma_i$.\\
        	\If{$\gamma_i<\gamma_i^*$}{
        		$\gamma_i^*,k^*,P_1^*,\dots,P_R^*\leftarrow\gamma_i,k,P_1,\dots,P_R$.\\
                $y_1^*,\dots,y_R^*\leftarrow y_1,\dots,y_R$.\\
        	}
        }
        \For{$r=1$ {\bfseries to} $R$}{
            According to the format in Fig.~\ref{fig:packetHeader}, encapsulate $P^*_r$ model updates in $\mathcal{P}_r$, each of which is quantized by $y_r^*$ bits, in the $r$-th packet.\\
        }
        Remaining parameters are set to 0 to get $\hat{\mathbf{U}}^i_t$.
    \end{algorithm}
	
	

    {\color{black}
    We illustrate how to implement our algorithm by encapsulating  model updates into  packets. Fig. \ref{fig:packetHeader} presents the format of  packets generated by Fed-CVLC, in which 
    we explain the fields related to Fed-CVLC in a packet as follows. 
    \begin{figure}[htbp]
    \vspace{-0.4cm}
	\setlength{\abovecaptionskip}{-0.1cm}
    	\centering
    	\includegraphics[width=0.85\linewidth]{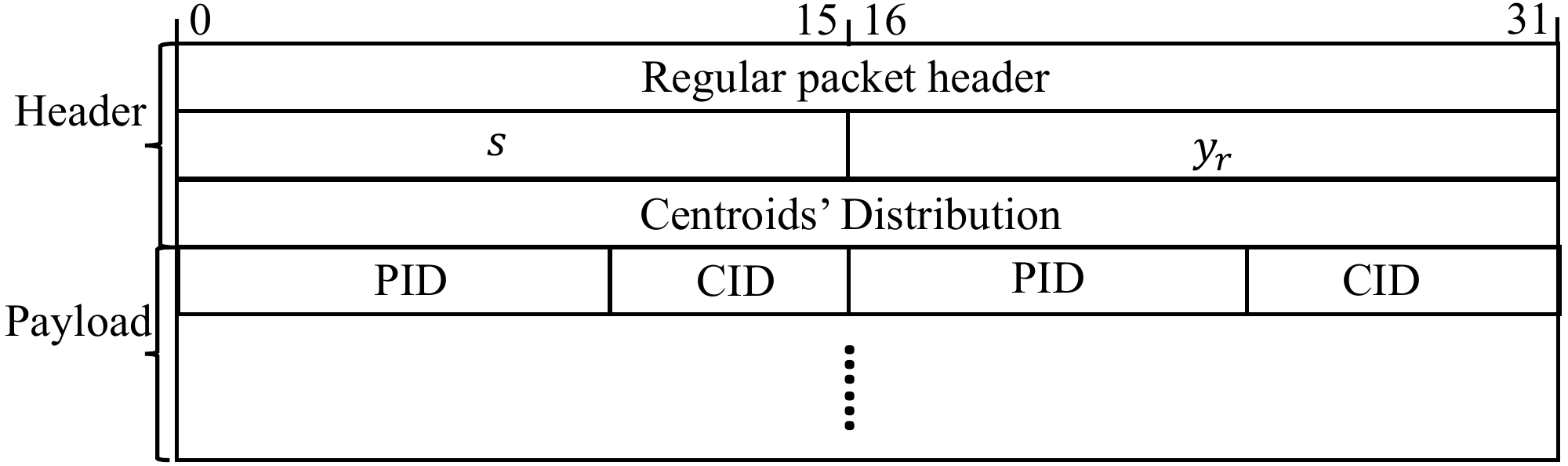}
    	\caption{Packet design when variable-length coding.}
    	\label{fig:packetHeader}
     \vspace{-0.4cm}
    \end{figure}

    \emph{1) }$s$ and $y_r$ indicate the number of bits to represent position ID and the number of bits for expressing each centriod.

    \emph{2) }Centroid Distribution helps the PS to compute  centroids of the quantization algorithm. 
	The centroids are uniformly distributed when using PQ and QSGD for quantization. The distribution of PQ only needs the maximum and minimum values of model updates (consuming 64 bits).  QSGD quantizes the absolute values of model updates, which needs to transmit the l2-norm of model updates (consuming 32 bits).

    \emph{3) }PID and CID indicate position ID of model update and centroid ID corresponding to the value of model update.
    
In our design, the communication overhead is lightweight, which is incurred by the $s$ and $y_r$ fields in the Header part.
In Algo.~\ref{Fed-CVLCAlgorithm}, we present the pseudocode to optimize code length in Fed-CVLC. It can be embedded into existing FL algorithms such as FedAvg, conducted per communication round before uploading model updates. In particular,  line 6 is the key step to optimize the number of model updates contained in two consecutive packets. Its time complexity  is $\mathcal{O}(\frac{b*R^2}{\log_2d})$, which is much lower than that of model training. In addition, this workload can be  offloaded to the PS if a client with limited  capacity transmits  the  model update distribution to the PS, which can solve $\mathbb{P}2$ and return  $k^*$ and  $P_r^*$  for  compression.

    }
	{\color{blue}
    }


\section{Experiment}
\label{Experiment}



In this section, we evaluate the performance of Fed-CVLC using public standard datasets and state-of-the-art baselines. 

\subsection{Experimental Settings}
\noindent{\bf Datasets.} In our experiments, we employ three standard image datasets: CIFAR-10, CIFAR-100 and FEMNIST datasets.
For both CIFAR-10 and CIFAR-100, there are 50,000 images as the training set and 10,000 images as  the test set. Each image has 3*32*32 pixels. The CIFAR-10 dataset has 10 labels, while  the CIFAR-100 dataset has 100 labels. The FEMNIST dataset contains handwritten digital and letter images with 62 labels in total. Each image is  with a size of 28*28.

For CIFAR-10 and CIFAR-100, we randomly  assign the training set data to clients.  Each client will be assigned with 500 samples. By tuning the label assignment, we have both IID and non-IID (not independent and identically distributed) data distributions in our experiments. For the IID data distribution, samples assigned to each client  are randomly selected from entire datasets of CIFAR-10 and CIFAR-100, respectively. For the non-IID data distribution, samples assigned to each client are selected from a subset containing only 5 out of 10 labels in CIFAR-10 and  20 out of 100 labels in CIFAR-100. 
FEMNIST is naturally non-IID since 
each client only owns images generated by different users. Each client will have  300-400 FEMNIST samples.

\noindent{\bf Models.} The FL task for each image dataset is to train a CNN model for predicting image  labels correctly. 
{We train CNN models with 3-layer (number of parameters $3*10^5$), 4-layer (number of parameters $5*10^6$) and 2-layer (number of parameters $4.5*10^5$) to classify CIFAR-10,CIFAR-100 and FEMNIST images,  respectively.}
The structure of each layer  is Convolution-BatchNormalization-MaxPooling. 
{We constrain that the number of packets transmitted by each client per communication round is 10 for CIFAR-10 and FEMNIST, and 90 for CIFAR-100.}
Based on the definition of the maximum transmission unit (MTU) in \cite{islam2016quality}, we set the size of each communication packet to be 1,500 bytes. 
Our trained CNN models are frequently used by prior works \cite{honig2022dadaquant,wang2020optimizing}.

\noindent{\bf System Settings.}
We set up 100 clients and a single PS. In each communication round, 10 out of 100 clients are randomly selected by the PS to participate in training. Each selected client will update models  with $E=5$ local iterations per communication round. We set the learning rate as 0.1 for CIFAR-10, and 0.05 for FEMNIST and CIFAR-100.

\noindent{\bf Baselines.}
As we have introduced, combining quantization and sparsification is the state-of-the-art compression technique in FL. We implement such compression algorithms by combining  PQ/QSGD \cite{suresh2017distributed, alistarh2017qsgd} and $Top_k$ \cite{stich2018sparsified}, the well-known sparsification compression algorithm, as our baselines. 
To be specific, other than the $Top_k$ algorithm, we  implement $P(Q)_6Top_k$, $P(Q)_{8}Top_k$ and $P(Q)_{10}Top_k$ by combining PQ (QSGD) with $Top_k$ where each top model update is quantized with 6, 8 or  10 bits, respectively.
{We evaluate Fed-CVLC in comparison with baselines from two perspectives: {\bf model accuracy} and {\bf communication traffic}.}
 \subsection{Experimental Results}
\noindent{\bf Comparing Model Accuracy.} 
\begin{figure*}
	\setlength{\abovecaptionskip}{-0.1cm}
    \centering
    \subfigure{
        \begin{minipage}[t]{\linewidth}
        \centering
        \includegraphics[width=0.9\linewidth]{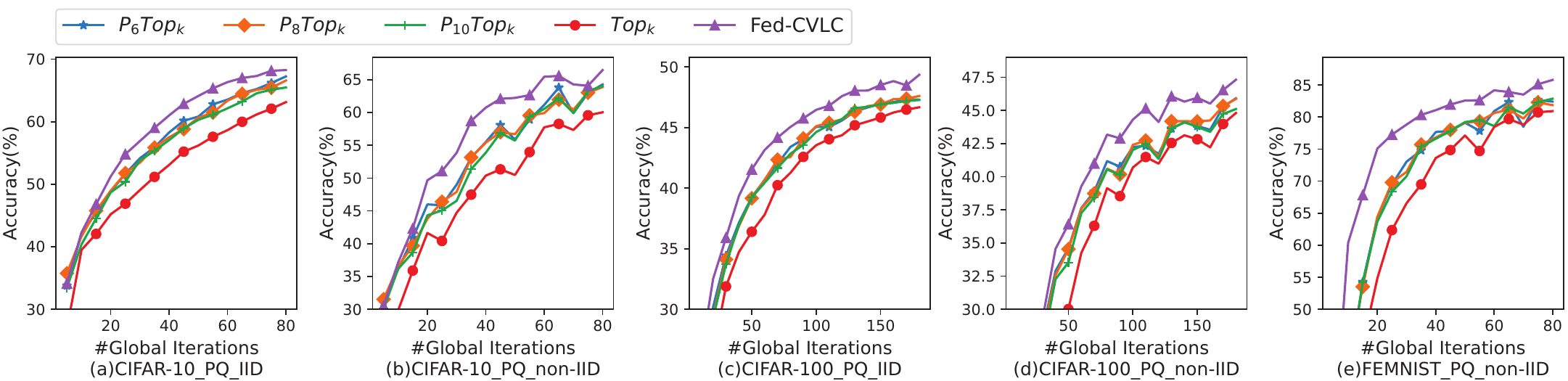}
        \end{minipage}%
    }
    \\[-0.1cm]
    \subfigure{
        \begin{minipage}[t]{\linewidth}
        \centering
        \includegraphics[width=0.9\linewidth]{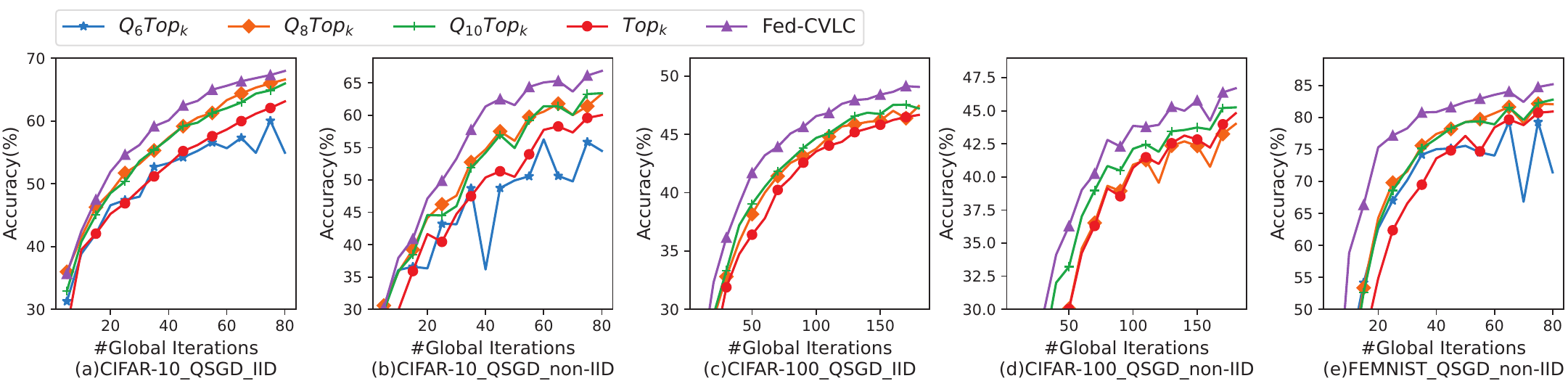}
        \end{minipage}%
    }
    \caption{{\color{black}Comparison of model accuracy on the test set of CIFAR-10, CIFAR-100 and FEMNIST when using PQ(Top) and QSGD(Bottom) for quantization.}}
	\label{fig:PQ}
 \vspace{-0.4cm}
\end{figure*}
In 
Fig.~\ref{fig:PQ}, we compare the model accuracy of each algorithm  on different test datasets. 
In each figure, the x-axis represents the number of conducted communication rounds, while the y-axis represents the model accuracy on the test set. Note that we have fixed the amount of communication traffic (\emph{i.e.} 10 packets per client) for each compression algorithm per communication round so that we can fairly compare the model accuracy of these algorithms.  From experiment results presented in Fig.~\ref{fig:PQ}, we can draw the following observations: 
 \begin{itemize}
	  \item  Fed-CVLC can steadily achieve the highest model accuracy after every global iteration in all experiment cases under both IID and non-IID data distributions. The improvement of Fed-CVLC indicates the effectiveness to compress model updates with variable-length codes. 
	  \item Fed-CVLC is a generic design in that both PQ and QSGD can be applied to outperform corresponding baselines in terms of model accuracy. 
	  \item In most cases, combining PQ and  $Top_k$  can achieve better model accuracy than that of $Top_k$, indicating that the design combining quantization and sparsification is the state-of-the-art compression technique, though it is worse than Fed-CVLC. 
	  \item  The PQ algorithm is slightly better than the QSGD algorithm in terms of model accuracy. In particular, the performance of $Q_6Top_k$ is even worse than that of $Top_k$ because QSGD cannot work well if the number of bits for quantization is too few.  
\end{itemize}

\begin{table*}[htb]
\setlength{\abovecaptionskip}{-0.05cm}
	\centering
	\caption{Comparison of total uplink communication traffic (MB) consumed by different compression algorithms to reach  target model accuracy when using {\color{black}PQ(Top) and QSGD(Bottom)} for quantization. }
	\label{tab:TrafficPQ}
	\begin{tabular}{|c|c|c|c|c|c|c|}
		\hline
		& Fed-CVLC & $Top_k$ & $P_6Top_k$ & $P_8Top_k$ & $P_{10}Top_k$ & Reduced \% by ours \\
		\hline
		CIFAR-10+IID (63\%) &\textbf{7.15} & 11.44 & 8.58 & 8.58 & 9.30 & 16.67\%\\
		CIFAR-10+non-IID (60\%) & \textbf{5.72} & 11.44 & 8.58 & 9.30 & 8.58 & 33.33\%\\
		FEMNIST (80\%) & \textbf{5.01} & 10.73 & 8.58 & 8.58 & 9.30 & 41.61\%\\
		CIFAR-100+IID (46\%) & \textbf{128.75} & 205.99 & 167.37 & 167.37 & 167.37 & 23.07\%\\
		CIFAR-100+non-IID (44\%) & \textbf{128.75} & 231.74 & 180.24 & 167.37 & 180.24 & 23.07\%\\
		\hline
	\end{tabular}
	\\[4pt]
	\label{tab:TrafficQSGD}
	\begin{tabular}{|c|c|c|c|c|c|c|}
		\hline
		& Fed-CVLC & $Top_k$ & $Q_8Top_k$ & $Q_{10}Top_k$ & $Q_{12}Top_k$ & Reduced \% by ours  \\
		\hline
		CIFAR-10+IID(63\%) & \textbf{7.15} & 11.44 & 8.58 & 10.01 & 9.30 & 16.67\%\\
		CIFAR-10+non-IID(60\%) & \textbf{5.72} & 11.44 & 8.58 & 8.58 & 8.58 & 33.33\%\\
		FEMNIST (80\%)  & \textbf{5.01} & 10.73 & 8.58 & 9.30 & 8.58 & 41.61\%\\
		CIFAR-100+IID(46\%) & \textbf{128.75} & 205.99 & 180.24 & 167.37 & 167.37 & 23.07\%\\
		CIFAR-100+non-IID(44\%) & \textbf{167.37} & 231.74 & 231.74 & 218.87 & 218.87 & 23.53\%\\
		\hline
	\end{tabular}
 \vspace{-2.0em}
\end{table*}


\noindent{\bf Comparing Communication Traffic.} 
To verify that Fed-CVLC can significantly diminish communication cost, we further compare the total volume  of consumed uplink communication traffic by  different compression algorithms.  In other words, we accumulate the amount of uplink traffic until the trained model can reach the target model accuracy on test sets. 
The experiment results are presented in Table~\ref{tab:TrafficPQ}. 
Experiment results shed light on that: 1) Fed-CVLC always consumes the least amount of communication traffic to achieve the target model accuracy in all experiment cases. 2)  The last column in table shows the percent of communication traffic further shrunk by Fed-CVLC in comparison with the second best one. It shows that Fed-CVLC can further reduce the communication cost by 16.67\%-41.61\%  based on the state-of-the-art compression technique. 3) Model compression can sheer reduce communication cost. For example, for training the CNN model with 5 million parameters to classify CIFAR-100, 
the total uplink communication traffic is reduced to  128.75MB over the entire training process.

\section{Conclusion}
\label{Conclusion}

FL is a cutting-edge technique in preserving data privacy during the training of machine learning models. Different from traditional machine learning, FL will not relocate data samples, which however can incur heavy communication overhead. {Until now, quantization and sparsification are two most popular compression approaches to  prohibiting   communication cost of FL.} Yet, existing compression algorithms rigidly set a fixed code length for model compression, restricting the effectiveness of  compression. {In this work, we made an initial attempt  to expand  the compression  design space in FL by generalizing  quantization and sparsification with variable-length codes.} Not only we considered  practical implementation of communication packets in our design, but also conducted rigorous analysis to guarantee  the  performance of our algorithm. 
Comprehensive experiments have been conducted with public datasets to demonstrate that Fed-CVLC can considerably shrink clients' uplink communication traffic in FL. In comparison with the state-of-the-art technique, Fed-CVLC can further shrink communication traffic by 27.64\%  or improve model accuracy  by 3.21\% on average. Our future work is to extend Fed-CVLC to make it applicable in more advanced FL systems such as decentralized FL or hierarchical FL systems.

\appendices

\section{}
\subsection{Proof of Lemma~\ref{LEM:QuanError}}
\label{Proof_LEM:QuanError}
	We assume that a vector $\mathbf{\overline{U}}$ has $z_y$ elements, which are quantized by $y$ bits, to get vector $\widetilde{\mathbf{U}}$. Each element in vector $\widetilde{\mathbf{U}}$ is further scaled by dividing a factor $B$ to obtain vector $\hat{\mathbf{U}}$. The error between $\mathbf{\overline{U}}$ and $\hat{\mathbf{U}}$ is therefore bounded by:
	\begin{align}
		\mathbb{E}\|\hat{\mathbf{U}}-\mathbf{\overline{U}}\|^2&=\mathbb{E}\|\frac{\widetilde{\mathbf{U}}}{B}-\mathbf{\overline{U}}\|^2\notag\\
		&\overset{(a)}{=}\mathbb{E}\|\frac{\widetilde{\mathbf{U}}-\mathbf{\overline{U}}}{B}\|^2+\mathbb{E}\|(1-\frac{1}{B})\mathbf{\overline{U}}\|^2\notag\\
		&\overset{(b)}{\le} \frac{Q(z_y, y)+(B-1)^2}{B^2}\|\mathbf{\overline{U}}\|^2,
	\end{align}
	where equality $(a)$ holds because the quantization algorithm is unbiased. 
 Inequality $(b)$ holds because $Q(z_y,y)$ represents the error by quantizing each element  with $y$ bits for  total $z_y$ elements, \emph{i.e.,}$\mathbb{E}\|\widetilde{\mathbf{U}}-\mathbf{\overline{U}}\|^2\le Q(z_y,y)\|\mathbf{\overline{U}}\|^2$.
 
\subsection{Proof of Proposition~\ref{LEM:ErrorBound}}
\label{Proof_LEM:ErrorBound}	We start from analyzing the compression error caused by variable-length coding.
	The error $\|\hat{\mathbf{U}}^i_t-\mathbf{U}^i_t\|^2$ has two components: sparsification error and  quantization error with variable-length codes for $Y$ different groups.  By substituting the power law distribution in Definition~\ref{DEF:PowerDist} into $U^i_t\{l\}$, 
	we have
	\begin{align}
		&\mathbb{E}\frac{\|\hat{\mathbf{U}}^i_t-\mathbf{U}^i_t\|^2}{\|\mathbf{U}^i_t\|^2}=\mathbb{E}\frac{\sum_{y=1}^Y\|\frac{\widetilde{\mathbf{U}}^i_{ty}}{B}-\mathbf{U}^i_{ty}\|^2+\sum_{l=k+1}^d(U^i_t\{l\})^2}{\|\mathbf{U}^i_t\|^2}\notag\\
		&\le \frac{\sum_{y=1}^Y\frac{Q(z_y,y)+(B-1)^2}{B^2}\|\mathbf{U}^i_{ty}\|^2}{\|\mathbf{U}^i_t\|^2} + \frac{\sum_{l=k+1}^d(U^i_t\{l\})^2}{\|\mathbf{U}^i_t\|^2}\notag\\
		&\approx \frac{d^{2\alpha_i+1}-(k+1)^{2\alpha_i+1}}{d^{2\alpha_i+1}-1} + \frac{1}{d^{2\alpha_i+1}-1}\sum_{y=1}^Y\notag\\
		&\quad\big((\frac{Q(z_y,y)}{B^2}+\frac{1}{B_c^2}) (Z_y^{2\alpha_i+1} - Z_{y+1}^{2\alpha_i+1}) \big),
	\end{align}
	where $Z_{y+1}=z_{Y}+z_{Y-1}+\dots+z_{y+1}$ and $\mathbf{U}^i_{ty}$ is a $z_y$-dimension vector. Each element in $\mathbf{U}^i_{ty}$ will be  quantized with $y$ bits. The elements in the vector are $U^i_t\{l\}'s $ where $l=Z_{y+1}+1, Z_{y+1}+2, \dots, Z_{y+1}+z_y$. 
	To ensure that the trained model converges, we need to make the compression algorithm satisfy the condition that $\mathbb{E}\frac{\|\hat{\mathbf{U}}^i_t-\mathbf{U}^i_t\|^2}{\|\mathbf{U}^i_t\|^2}\le\gamma_i$,where $0<\gamma_i<1$. 
 This conditions implies that the compression is bounded such that the trained model will not diverge in the end. 
	Therefore, we have $B>\max_{\forall y}\frac{Q(z_y,y)+1}{2}$ to guarantee  $0<\gamma_i<1$. 

\subsection{Proof of Lemma~\ref{LEM:ConvexOfF}}
\label{Proof_LEM:ConvexOfF}
Recall that $Z_r=P_1+\dots+P_r$ and $Z_{r-2}=P_1+\dots+P_{r-2}$.  $X$ is a fixed number, and thus $P_r+P_{r-1}=X$ is fixed. We can get $f(P_r)=(\frac{Q(P_r)}{B^2}+\frac{1}{B_c^2})\frac{Z_{r}^{\beta_i}-(Z_r-P_r)^{\beta_i}}{d^{\beta_i}-1} + (\frac{Q(X-P_r)}{B^2}+\frac{1}{B_c^2}) \frac{(Z_r-P_r)^{\beta_i}-Z_{r-2}^{\beta_i}}{d^{\beta_i}-1}.$
We derive 
the above equation to have:
\begin{align}
	f'(P_r)&=\underbrace{\frac{Q'(P_y)}{B^2}*\frac{Z_{r}^{\beta_i}-(Z_r-P_r)^{\beta_i}}{d^{\beta_i}-1}}_{A_1}\notag\\
	&\quad+\underbrace{(\frac{Q(P_r)}{B^2}+\frac{1}{B_c^2})\frac{\beta_i(Z_r-P_r)^{\beta_i-1}}{d^{\beta_i}-1}}_{A_2}\notag\\
	&\quad -\underbrace{\frac{Q'(X-P_r)}{B^2}*\frac{(Z_r-P_r)^{\beta_i}-Z_{r-2}^{\beta_i}}{d^{\beta_i}-1}}_{A_3}\notag\\
	&\quad-\underbrace{(\frac{Q(X-P_r)}{B^2}+\frac{1}{B_c^2})\frac{\beta_i(Z_r-P_r)^{\beta_i-1}}{d^{\beta_i}-1}}_{A_4}.
\end{align}
Here $Q'(P_r)$ is the derivative of the quantization error.
Since $Q(P_y)$ is an increasing strongly convex function, $Q'(P_y)$ is monotonically increasing. Moreover, the derivative of $\frac{Z_{r}^{\beta_i}-(Z_r-P_r)^{\beta_i}}{d^{\beta_i}-1}$  is a monotonically increasing function. Since the two terms multiplied in $A_1$ are greater than 0, $A_1$ is monotonically increasing. Similarly, we get that $A_3$ is monotonically decreasing. Therefore $A_1-A_3$ is monotonically increasing.
Similar to the analysis of $A_1-A_3$, we have that  $A_2-A_4$ is monotonically increasing.
Since $A_1-A_3$ and $A_2-A_4$ are both monotonically increasing, it manifests that $f''(P_r) > 0$, and hence $f(P_r)$ is a strongly convex function.

We proceed to  discuss why $P_r > P_{r-1}$.
If $P_r<\frac{X}{2}$, it means $P_r<P_{r-1}$ and $P_r<X-P_{r}$. There are $\frac{Q'(P_y)}{B^2}<\frac{Q'(X-P_r)}{B^2}$ and $\frac{Z_{r}^{\beta_i}-(Z_r-P_r)^{\beta_i}}{d^{\beta_i}-1}<\frac{(Z_r-P_r)^{\beta_i}-Z_{r-2}^{\beta_i}}{d^{\beta_i}-1}$ in this scenario. Hence, we have $A_1-A_3<0$ and $A_2-A_4<0$, suggesting that $f(P_r)$ is a decreasing function if  $P_r<\frac{X}{2}$, and thus $f(P_r)$ can be decreased if we increase $P_r$ until $P_r>\frac{X}{2}$.
Thus, the optimal $P_r$ must be in the range $(\frac{X}{2}, X)$, implying that  $P_r>P_{r-1}$.

\subsection{Case Study}
\label{CaseStudy}
We show that quantization error $Q(P_r)$ is an increasing convex function by conducting a case study with two typical quantization algorithms,  namely PQ and QSGD algorithms.

In the PQ \cite{suresh2017distributed} algorithm,  centroids are uniformly distributed within the value range of  model updates.  Each model update within each interval 
is mapped to its upper or lower centroids. The quantization error of PQ is $Q(P_r)=\frac{P_r}{(2^{\frac{b-H}{P_r}-s}-1)^2}$ \cite{suresh2017distributed}.

\begin{theorem}
	\label{THE:ConvexOfQPQ}
	The quantization error $Q(P_r)$ of PQ is an increasing convex function with respect to $P_r$.
\end{theorem}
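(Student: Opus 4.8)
The plan is to regard $Q$ as a function of the single real variable $P_r$ on the feasible range (where the code length $y_r=\frac{b-H}{P_r}-s$ is positive, so that $P_r<\frac{b-H}{s}$), and to compute $Q'$ and $Q''$ explicitly, showing both are strictly positive there. Set $c:=b-H>0$ and $a:=c\ln 2>0$, and introduce the auxiliary function $h(P_r):=2^{c/P_r-s}$, so that $Q(P_r)=P_r\bigl(h(P_r)-1\bigr)^{-2}$. On the feasible range $y_r>0$, hence $h>1$; this positivity of $h-1$, together with $a>0$ and $P_r>0$, is what fixes all the signs below. The only differentiation identity I need is $h'(P_r)=-\frac{a}{P_r^{2}}h(P_r)$, which follows from $h=\exp\!\bigl((c/P_r-s)\ln 2\bigr)$ by the chain rule; I will also use its rewriting $P_rh'(P_r)=-\frac{a}{P_r}h(P_r)$.

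First I would establish monotonicity. Differentiating $Q=P_r(h-1)^{-2}$ and substituting the identity for $h'$ gives
\[
Q'(P_r)=(h-1)^{-2}+\frac{2ah}{P_r(h-1)^{3}},
\]
and since $h>1$, $a>0$ and $P_r>0$, both summands are strictly positive, so $Q$ is strictly increasing.

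For convexity I would differentiate once more. The derivative of $(h-1)^{-2}$ equals $-2(h-1)^{-3}h'=\frac{2ah}{P_r^{2}(h-1)^{3}}$. For the term $\frac{2ah}{P_r(h-1)^{3}}$, the quotient rule yields a numerator from which $(h-1)^{2}$ factors out; eliminating $h'$ via $P_rh'=-\frac{a}{P_r}h$ collapses the remaining factor to $h\bigl[\frac{a(2h+1)}{P_r}-(h-1)\bigr]$, so this term contributes $\frac{2a^{2}h(2h+1)}{P_r^{3}(h-1)^{4}}-\frac{2ah}{P_r^{2}(h-1)^{3}}$ to $Q''$. Adding the two contributions, the $\pm\frac{2ah}{P_r^{2}(h-1)^{3}}$ pieces cancel exactly and we are left with
\[
Q''(P_r)=\frac{2a^{2}h(2h+1)}{P_r^{3}(h-1)^{4}}>0
\]
on the whole feasible range, since $a>0$, $h>1$ and $P_r>0$. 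Together with the previous paragraph this shows $Q$ is increasing and convex, proving the theorem.

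The one delicate point is the second-derivative algebra: it requires a careful quotient rule on $\frac{h}{P_r(h-1)^{3}}$ and is easy to botch, but it simplifies decisively once $h'$ is eliminated through $h'=-\frac{a}{P_r^{2}}h$, after which the numerator factors through $h$ and the leftover term cancels against the derivative of $(h-1)^{-2}$. I would also remark that one cannot shorten the argument by proving log-convexity of $Q$: computing $(\ln Q)''$ yields, up to a positive factor, a downward-opening quadratic in $u:=P_r(h-1)$ of the form $-u^{2}-4ahu+2a^{2}h$, which becomes negative for large $u$ (equivalently, for small $P_r$), so $Q$ is convex but not log-convex in general, and the estimate must be carried out on $Q$ itself.
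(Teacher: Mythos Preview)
Your proof is correct and in fact cleaner than the paper's. Both arguments compute $Q'(P_r)=(h-1)^{-2}+\frac{2ah}{P_r(h-1)^3}$ and observe it is positive. For convexity, however, the paper proceeds term by term: it notes that $(h-1)^{-2}$ is increasing and then argues that the other summand $\frac{2ah}{P_r(h-1)^3}$ is also increasing, which after differentiation reduces to the inequality $\frac{a}{P_r}(2h+1)>h-1$; the paper asserts this without justification (it does hold on the feasible range, since $\frac{a}{P_r}=\frac{c}{P_r}\ln 2>s\ln 2\ge\ln 2>\tfrac12$ while the right side divided by $2h+1$ is below $\tfrac12$, but this step is left implicit). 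Your route instead differentiates $Q'$ as a whole and observes that the mixed terms $\pm\frac{2ah}{P_r^2(h-1)^3}$ cancel exactly, leaving the manifestly positive closed form $Q''(P_r)=\frac{2a^2h(2h+1)}{P_r^3(h-1)^4}$. This buys you a self-contained argument with no auxiliary inequality to check. Your closing remark on log-convexity is a nice observation but inessential to the theorem.
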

\begin{proof}
	We can obtain 
	$Q'(P_r)=\frac{2^{\frac{b-H}{P_r}-s+1}*\ln2*(b-H)}{(2^{\frac{b-H}{P_r}-s}-1)^3P_r}+\frac{1}{(2^{\frac{b-H}{P_r}-s}-1)^2}>0.$ 
	The second term in $Q'(P_r)$ is increasing as $P_r$ increases, so we analyze the first term separately.  The numerator of the derivative of the first term is $3[2^{\frac{b-H}{P_r}-s}-1]^2*2^{\frac{b-H}{P_r}-s}*\ln2*\frac{b-H}{P_r}*2^{\frac{b-H}{P_r}-s+1}-2^{\frac{b-H}{P_r}-s+1}[2^{\frac{b-H}{P_r}-s}-1]^3*[\ln2*\frac{b-H}{P_r}+1]>0.$
	This proves that $Q''(P_r) > 0$ and shows that $Q(P_r)$ is an increasing convex function.
\end{proof}
QSGD  is another typical quantization algorithm. In \cite{alistarh2017qsgd}, its  quantization error is $Q(P_r)=\min(Q_1(P_r), Q_2(P_r))$, where $Q_1(P_r) = \frac{P_r}{2^{2(\frac{b-H}{P_r}-s)}}$ and $Q_2(P_r)= \frac{\sqrt{P_r}}{2^{\frac{b-H}{P_r}-s}}$.
\begin{theorem}
	\label{THE:ConvexOfQQSGD}
	Both $Q_1(P_r)$ and $ Q_2(P_r) $  are increasing convex functions with respect to $P_r$. 
\end{theorem}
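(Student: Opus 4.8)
The plan is to handle $Q_1$ and $Q_2$ by the same recipe used for PQ in Theorem~\ref{THE:ConvexOfQPQ}: substitute the code length $y_r=\frac{b-H}{P_r}-s$, rewrite each error as a clean function of $P_r$ with a single base-$e$ exponential, and then read off the signs of the first two derivatives. With $c:=(b-H)\ln 2>0$, the identities $2^{-2(\frac{b-H}{P_r}-s)}=2^{2s}e^{-2c/P_r}$ and $2^{-(\frac{b-H}{P_r}-s)}=2^{s}e^{-c/P_r}$ give
\begin{align}
Q_1(P_r)=2^{2s}\,P_r\,e^{-2c/P_r},\qquad Q_2(P_r)=2^{s}\,\sqrt{P_r}\,e^{-c/P_r}.\notag
\end{align}
Because $2^{2s}$ and $2^{s}$ are positive constants, it suffices to show that $p(P):=P\,e^{-2c/P}$ and $q(P):=\sqrt{P}\,e^{-c/P}$ are increasing and convex for $P>0$.

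For $Q_1$ this is immediate: $p'(P)=e^{-2c/P}\bigl(1+\tfrac{2c}{P}\bigr)>0$, so $Q_1$ is increasing, and one more differentiation lets the two cross terms cancel, leaving the manifestly positive $p''(P)=e^{-2c/P}\,\tfrac{4c^{2}}{P^{3}}$, so $Q_1$ is strongly convex. No case analysis is needed for this branch.

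The branch $Q_2$ is more delicate and is where I expect the real work. Here $q'(P)=e^{-c/P}P^{-1/2}\bigl(\tfrac12+\tfrac{c}{P}\bigr)>0$, so $Q_2$ is increasing, but after collecting powers of $P$ the second derivative is
\begin{align}
q''(P)=e^{-c/P}\,P^{-7/2}\Bigl(c^{2}-cP-\tfrac14 P^{2}\Bigr),\notag
\end{align}
and the bracket is \emph{not} positive for all $P>0$; read as a quadratic in $c$ it is positive exactly when $\tfrac{c}{P}>\tfrac{1+\sqrt2}{2}\approx 1.21$. The obstacle is therefore not the differentiation but confining the argument to this range, and I would close it with the feasibility of $\mathbb{P}2$: every model update carries at least one bit, that is $y_r=\frac{b-H}{P_r}-s\ge 1$, so $\tfrac{c}{P_r}=(y_r+s)\ln 2$, and since $s=\lceil\log_2 d\rceil\ge 1$ for any model with $d\ge 2$ parameters we get $\tfrac{c}{P_r}\ge 2\ln 2\approx 1.39>\tfrac{1+\sqrt2}{2}$ (in practice $d$ is in the millions, so the margin is enormous). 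Hence $q''>0$ on the feasible interval and $Q_2$ is convex there. I would finish by noting that convexity of the two branches $Q_1$ and $Q_2$ taken separately is exactly what Lemma~\ref{LEM:ConvexOfF} requires once the QSGD error $\min(Q_1,Q_2)$ is replaced by either of these upper bounds, so the atomic operation remains well-posed when QSGD is the quantizer.
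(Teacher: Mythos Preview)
Your proposal is correct and, in fact, more careful than the paper itself. The paper omits the proof entirely, saying only that it is ``similar to that of Theorem~\ref{THE:ConvexOfQPQ}''; your treatment follows exactly this template for $Q_1$ and for the monotonicity of $Q_2$, but then uncovers a real subtlety the paper passes over: $q(P)=\sqrt{P}\,e^{-c/P}$ is \emph{not} convex on all of $(0,\infty)$, only on the region $c/P>\tfrac{1+\sqrt2}{2}$. Your fix---invoking the feasibility constraint $y_r\ge 1$ together with $s=\lceil\log_2 d\rceil\ge 1$ to force $c/P_r=(y_r+s)\ln 2\ge 2\ln 2>\tfrac{1+\sqrt2}{2}$---is the right way to close the gap, and it is exactly what Lemma~\ref{LEM:ConvexOfF} needs, since that lemma is only ever applied to $P_r$ values arising from problem $\mathbb{P}2$. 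The closing remark that the atomic operation stays well-posed once $\min(Q_1,Q_2)$ is relaxed to either branch is also on point and matches how the paper intends QSGD to be handled.
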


%
%

The proof is similar to that  of Theorem~\ref{THE:ConvexOfQPQ}, and thus omitted.

\subsection{Proof of Theorem~\ref{THE:OptimalSolution}}
\label{Proof_THE:OptimalSolution}
We will prove Theorem~\ref{THE:OptimalSolution} by a contradiction. 
Considering that $f(X,P_{r-1},P_{r})$ is a strongly convex function, there is only one optimal solution to minimize $f(X,P_{r-1},P_{r})$ for $\forall r$. Thus, the sequence $P_1^*,\dots,P_R^*$ that minimizes $f(X,P_{r-1},P_{r})$ for $2\le r\le R$ is unique. We use $\gamma_i^*$ to denote the compression error in this case.
If there exists another sequence $P_1',\dots, P_R'$ and its corresponding error is $\gamma_i'<\gamma_i^*$. There must be a pair of $P_{r-1}'$ and $P_{r}'$ in this sequence that cannot minimize the function $f(X,P_{r-1},P_{r})$. Therefore, the number of model updates encapsulated into these two packets can be adjusted to obtain $P_{r-1}''$ and $P_{r}''$ so that $f(X,P_{r-1},P_{r})$ is further reduced to achieve a lower error  $\gamma_i''<\gamma_i'$. The sequence of $\gamma_i''$ continues to adjust the assignment of each two adjacent packets to gradually reduce the error, which will eventually result in $\gamma_i''<\gamma_i'$, contradicting the assumption that $\gamma_i'$ is the minimum value.

Thus, $P_1^*,\dots,P_R^*$ that minimize the $f$ function for any two adjacent packets can achieve the minimum error $\gamma_i^*$.

\clearpage

\bibliographystyle{IEEEtran} 
\bibliography{reference}

\begin{thebibliography}{10}
\providecommand{\url}[1]{#1}
\csname url@samestyle\endcsname
\providecommand{\newblock}{\relax}
\providecommand{\bibinfo}[2]{#2}
\providecommand{\BIBentrySTDinterwordspacing}{\spaceskip=0pt\relax}
\providecommand{\BIBentryALTinterwordstretchfactor}{4}
\providecommand{\BIBentryALTinterwordspacing}{\spaceskip=\fontdimen2\font plus
\BIBentryALTinterwordstretchfactor\fontdimen3\font minus
  \fontdimen4\font\relax}
\providecommand{\BIBforeignlanguage}[2]{{%
\expandafter\ifx\csname l@#1\endcsname\relax
\typeout{** WARNING: IEEEtran.bst: No hyphenation pattern has been}%
\typeout{** loaded for the language `#1'. Using the pattern for}%
\typeout{** the default language instead.}%
\else
\language=\csname l@#1\endcsname
\fi
#2}}
\providecommand{\BIBdecl}{\relax}
\BIBdecl

\bibitem{yang2019federated}
Q.~Yang, Y.~Liu, T.~Chen, and Y.~Tong, ``{Federated machine learning: Concept
  and applications},'' \emph{ACM Transactions on Intelligent Systems and
  Technology (TIST)}, vol.~10, no.~2, pp. 1--19, 2019.

\bibitem{mcmahan2017communication}
B.~McMahan, E.~Moore, D.~Ramage, S.~Hampson, and B.~A. y~Arcas,
  ``{Communication-efficient learning of deep networks from decentralized
  data},'' in \emph{Artificial Intelligence and Statistics (AISTATS)}, 2017,
  pp. 1273--1282.

\bibitem{lim2020federated}
W.~Y.~B. Lim, N.~C. Luong, D.~T. Hoang, Y.~Jiao, Y.-C. Liang, Q.~Yang,
  D.~Niyato, and C.~Miao, ``{Federated learning in mobile edge networks: A
  comprehensive survey},'' \emph{IEEE Communications Surveys \& Tutorials
  (COMST)}, vol.~22, no.~3, pp. 2031--2063, 2020.

\bibitem{8466361}
P.~Wang, F.~Ye, and X.~Chen, ``{A Smart Home Gateway Platform for Data
  Collection and Awareness},'' \emph{IEEE Communications Magazine (MCOM)},
  vol.~56, no.~9, pp. 87--93, 2018.

\bibitem{hamer2020fedboost}
J.~Hamer, M.~Mohri, and A.~T. Suresh, ``{Fedboost: A communication-efficient
  algorithm for federated learning},'' in \emph{International Conference on
  Machine Learning (ICML)}, 2020, pp. 3973--3983.

\bibitem{rothchild2020fetchsgd}
D.~Rothchild, A.~Panda, E.~Ullah, N.~Ivkin, I.~Stoica, V.~Braverman,
  J.~Gonzalez, and R.~Arora, ``{Fetchsgd: Communication-efficient federated
  learning with sketching},'' in \emph{International Conference on Machine
  Learning (ICML)}, 2020, pp. 8253--8265.

\bibitem{konevcny2016federated}
J.~Kone{\v{c}}n{\`y}, H.~B. McMahan, F.~X. Yu, P.~Richt{\'a}rik, A.~T. Suresh,
  and D.~Bacon, ``{Federated learning: Strategies for improving communication
  efficiency},'' in \emph{Annual Conference on Neural Information Processing
  Systems (NeurIPS) Workshop on Private Multi-Party Machine Learning}, 2016,
  pp. 1--5.

\bibitem{he2016deep}
K.~He, X.~Zhang, S.~Ren, and J.~Sun, ``{Deep residual learning for image
  recognition},'' in \emph{IEEE conference on computer vision and pattern
  recognition (CVPR)}, 2016, pp. 770--778.

\bibitem{NeurIPS2017_3f5ee243}
A.~Vaswani, N.~Shazeer, N.~Parmar, J.~Uszkoreit, L.~Jones, A.~N. Gomez, L.~u.
  Kaiser, and I.~Polosukhin, ``{Attention is All you Need},'' in \emph{Advances
  in Neural Information Processing Systems (NeurIPS)}, vol.~30, 2017, pp.
  1--11.

\bibitem{wen2017terngrad}
W.~Wen, C.~Xu, F.~Yan, C.~Wu, Y.~Wang, Y.~Chen, and H.~Li, ``{TernGrad: Ternary
  Gradients to Reduce Communication in Distributed Deep Learning},'' in
  \emph{Annual Conference on Neural Information Processing Systems (NeurIPS)},
  2017, pp. 1509--1519.

\bibitem{bernstein2018signsgd}
J.~Bernstein, Y.-X. Wang, K.~Azizzadenesheli, and A.~Anandkumar, ``{signSGD:
  Compressed optimisation for non-convex problems},'' in \emph{International
  Conference on Machine Learning (ICML)}, 2018, pp. 560--569.

\bibitem{lin2017deep}
Y.~Lin, S.~Han, H.~Mao, Y.~Wang, and B.~Dally, ``{Deep Gradient Compression:
  Reducing the Communication Bandwidth for Distributed Training},'' in
  \emph{International Conference on Learning Representations (ICLR)}, 2018, pp.
  1--14.

\bibitem{sattler2019robust}
F.~Sattler, S.~Wiedemann, K.-R. M{\"u}ller, and W.~Samek, ``{Robust and
  Communication-Efficient Federated Learning From Non-i.i.d. Data},''
  \emph{{IEEE Transactions on Neural Networks and Learning Systems (TNNLS)}},
  vol.~31, no.~9, pp. 3400--3413, 2020.

\bibitem{m2021efficient}
A.~M~Abdelmoniem, A.~Elzanaty, M.-S. Alouini, and M.~Canini, ``{An efficient
  statistical-based gradient compression technique for distributed training
  systems},'' \emph{Machine Learning and Systems (MLSys)}, vol.~3, pp.
  297--322, 2021.

\bibitem{li2019convergence}
X.~Li, K.~Huang, W.~Yang, S.~Wang, and Z.~Zhang, ``{On the Convergence of
  FedAvg on Non-IID Data},'' in \emph{International Conference on Learning
  Representations (ICLR)}, 2020, pp. 1--26.

\bibitem{yang2021achieving}
H.~Yang, M.~Fang, and J.~Liu, ``{Achieving Linear Speedup with Partial Worker
  Participation in Non-IID Federated Learning},'' in \emph{International
  Conference on Learning Representations (ICLR)}, 2021, pp. 1--23.

\bibitem{9796724}
S.~Chen and B.~Li, ``{Towards Optimal Multi-Modal Federated Learning on Non-IID
  Data with Hierarchical Gradient Blending},'' in \emph{IEEE Conference on
  Computer Communications (INFOCOM)}, 2022, pp. 1469--1478.

\bibitem{9796719}
Z.~Wang, Y.~Zhu, D.~Wang, and Z.~Han, ``Fedfpm: A unified federated analytics
  framework for collaborative frequent pattern mining,'' in \emph{IEEE
  Conference on Computer Communications (INFOCOM)}, 2022, pp. 61--70.

\bibitem{9796721}
Y.~Liu, L.~Xu, X.~Yuan, C.~Wang, and B.~Li, ``The right to be forgotten in
  federated learning: An efficient realization with rapid retraining,'' in
  \emph{IEEE Conference on Computer Communications (INFOCOM)}, 2022, pp.
  1749--1758.

\bibitem{9796818}
J.~Perazzone, S.~Wang, M.~Ji, and K.~S. Chan, ``{Communication-Efficient Device
  Scheduling for Federated Learning Using Stochastic Optimization},'' in
  \emph{IEEE Conference on Computer Communications (INFOCOM)}, 2022, pp.
  1449--1458.

\bibitem{9796935}
B.~Luo, W.~Xiao, S.~Wang, J.~Huang, and L.~Tassiulas, ``{Tackling System and
  Statistical Heterogeneity for Federated Learning with Adaptive Client
  Sampling},'' in \emph{IEEE Conference on Computer Communications (INFOCOM)},
  2022, pp. 1739--1748.

\bibitem{9488756}
Z.~Wang, H.~Xu, J.~Liu, H.~Huang, C.~Qiao, and Y.~Zhao, ``{Resource-Efficient
  Federated Learning with Hierarchical Aggregation in Edge Computing},'' in
  \emph{IEEE Conference on Computer Communications (INFOCOM)}, 2021, pp. 1--10.

\bibitem{suresh2017distributed}
A.~T. Suresh, X.~Y. Felix, S.~Kumar, and H.~B. McMahan, ``{Distributed mean
  estimation with limited communication},'' in \emph{International Conference
  on Machine Learning (ICML)}, 2017, pp. 3329--3337.

\bibitem{alistarh2017qsgd}
D.~Alistarh, D.~Grubic, J.~Li, R.~Tomioka, and M.~Vojnovic, ``{QSGD:
  Communication-efficient SGD via gradient quantization and encoding},'' in
  \emph{Advances in Neural Information Processing Systems (NeurIPS)}, 2017, pp.
  1709--1720.

\bibitem{honig2022dadaquant}
R.~H{\"o}nig, Y.~Zhao, and R.~Mullins, ``{DAdaQuant: Doubly-adaptive
  quantization for communication-efficient Federated Learning},'' in
  \emph{International Conference on Machine Learning (ICML)}, 2022, pp.
  8852--8866.

\bibitem{wangni2018gradient}
J.~Wangni, J.~Wang, J.~Liu, and T.~Zhang, ``{Gradient sparsification for
  communication-efficient distributed optimization},'' in \emph{Advances in
  Neural Information Processing Systems (NeurIPS)}, 2018, pp. 1299--1309.

\bibitem{stich2018sparsified}
S.~U. Stich, J.-B. Cordonnier, and M.~Jaggi, ``{Sparsified SGD with memory},''
  in \emph{Advances in Neural Information Processing Systems (NeurIPS)}, 2018,
  pp. 4447--4458.

\bibitem{qian2021error}
X.~Qian, P.~Richt{\'a}rik, and T.~Zhang, ``{Error compensated distributed SGD
  can be accelerated},'' \emph{Advances in Neural Information Processing
  Systems (NeurIPS)}, vol.~34, pp. 30\,401--30\,413, 2021.

\bibitem{NEURIPS2019_d202ed5b}
D.~Basu, D.~Data, C.~Karakus, and S.~Diggavi, ``{Qsparse-local-SGD: Distributed
  SGD with Quantization, Sparsification and Local Computations},'' in
  \emph{Advances in Neural Information Processing Systems (NeurIPS)},
  H.~Wallach, H.~Larochelle, A.~Beygelzimer, F.~d\textquotesingle
  Alch\'{e}-Buc, E.~Fox, and R.~Garnett, Eds., vol.~32, 2019, pp. 1--12.

\bibitem{wang2019adaptive}
S.~Wang, T.~Tuor, T.~Salonidis, K.~K. Leung, C.~Makaya, T.~He, and K.~Chan,
  ``{Adaptive federated learning in resource constrained edge computing
  systems},'' \emph{IEEE Journal on Selected Areas in Communications (JSAC)},
  vol.~37, no.~6, pp. 1205--1221, 2019.

\bibitem{luo2020cost}
B.~Luo, X.~Li, S.~Wang, J.~Huang, and L.~Tassiulas, ``{Cost-Effective Federated
  Learning Design},'' in \emph{International Conference on Computer
  Communications (INFOCOM)}.\hskip 1em plus 0.5em minus 0.4em\relax IEEE, 2021,
  pp. 1--10.

\bibitem{9589061}
H.~Yang, J.~Liu, and E.~S. Bentley, ``{CFedAvg: Achieving Efficient
  Communication and Fast Convergence in Non-IID Federated Learning},'' in
  \emph{International Symposium on Modeling and Optimization in Mobile, Ad hoc,
  and Wireless Networks (WiOpt)}, 2021, pp. 1--8.

\bibitem{li2020talk}
L.~Li, D.~Shi, R.~Hou, H.~Li, M.~Pan, and Z.~Han, ``{To Talk or to Work:
  Flexible Communication Compression for Energy Efficient Federated Learning
  over Heterogeneous Mobile Edge Devices},'' in \emph{IEEE International
  Conference on Computer Communications (INFOCOM)}, 2021, pp. 1--10.

\bibitem{islam2016quality}
N.~Islam, C.~C. Bawn, J.~Hasan, A.~I. Swapna, and M.~S. Rahman, ``{Quality of
  service analysis of Ethernet network based on packet size},'' \emph{Journal
  of Computer and communications (JCC)}, vol.~4, no.~4, pp. 63--72, 2016.

\bibitem{platt1998sequential}
J.~Platt, ``{Sequential Minimal Optimization: A Fast Algorithm for Training
  Support Vector Machines},'' Microsoft, Tech. Rep. MSR-TR-98-14, April 1998.

\bibitem{wang2020optimizing}
H.~Wang, Z.~Kaplan, D.~Niu, and B.~Li, ``{Optimizing Federated Learning on
  Non-IID Data with Reinforcement Learning},'' in \emph{International
  Conference on Computer Communications (INFOCOM)}.\hskip 1em plus 0.5em minus
  0.4em\relax IEEE, 2020, pp. 1698--1707.

\end{thebibliography}

\end{document}